\documentclass{article}

\usepackage{microtype}
\usepackage{graphicx}
\usepackage{subcaption}
\usepackage{booktabs} 
\usepackage[numbers]{natbib}
\usepackage{booktabs} 
\usepackage{hyperref}
\usepackage{bbm}        
\usepackage{enumitem}   
\usepackage{multirow}   
\usepackage{float}      

\usepackage{algorithm}
\usepackage{algpseudocode} 

\newcommand{\cV}{\mathcal{V}}
\newcommand{\EE}{\mathbb{E}}

\newcommand{\one}{\mathbbm{1}}
\newcommand{\Argmax}{\operatorname*{arg\,max}}
\newcommand{\Argmin}{\operatorname*{arg\,min}}
\newcommand{\Wone}{W_{1}}

\newcommand{\Hsh}{H}

\newcommand{\ones}{\mathbf{1}}

\newcommand{\R}{\mathbb{R}}
\newcommand{\dist}{\mathrm{dist}}
\newcommand{\ArgTopM}{\operatorname*{arg\,top\,M}}

\newcommand{\minipostspace}{\vskip -1mm}


\usepackage[accepted]{icml2026}

\usepackage{amsmath}
\usepackage{amssymb}
\usepackage{mathtools}
\usepackage{amsthm}

\usepackage[capitalize,noabbrev]{cleveref}

\theoremstyle{plain}
\newtheorem{theorem}{Theorem}[section]

\newtheorem{lemma}[theorem]{Lemma}
\newtheorem{corollary}[theorem]{Corollary}
\theoremstyle{definition}

\newtheorem{remark}[theorem]{Remark}

\usepackage[textsize=tiny]{todonotes}

\icmltitlerunning{Top-$W$ Decoding}

\definecolor{cadmiumgreen}{rgb}{0.0, 0.42, 0.24}
\definecolor{cardinal}{rgb}{0.77, 0.12, 0.23}
\definecolor{cadmiumred}{rgb}{0.89, 0.0, 0.13}

\usepackage[most]{tcolorbox}
\newtcolorbox[list inside=prompt,auto counter,number within=section]{prompt}[1][]{
    fontupper=\ttfamily\footnotesize,
    boxsep=5pt,
    left=0pt,
    right=0pt,
    top=0pt,
    bottom=0pt,
    boxrule=1pt,
    breakable,
    #1,
}

\begin{document}

\twocolumn[
  \icmltitle{Geometry-Aware Decoding with Wasserstein-Regularized Truncation and Mass Penalties for Large Language Models}

  \begin{icmlauthorlist}
    \icmlauthor{Arash Gholami Davoodi}{cmu}
    \icmlauthor{Navid Rezazadeh}{uci}
    \icmlauthor{Seyed Pouyan Mousavi Davoudi}{}
    \icmlauthor{Pouya Pezeshkpour}{mega}
  \end{icmlauthorlist}

  \icmlaffiliation{cmu}{Carnegie Mellon University}
  \icmlaffiliation{mega}{Megagon Labs}
  \icmlaffiliation{uci}{University of California, Irvine}

  \icmlcorrespondingauthor{Arash Gholami Davoodi}{agholami@andrew.cmu.edu}

  \icmlkeywords{Machine Learning, ICML}

  \vskip 0.3in
]

\printAffiliationsAndNotice{}  

\begin{abstract}
Large language models (LLMs) must balance diversity and creativity against logical coherence in open-ended generation. 
Existing truncation-based samplers are effective but largely heuristic, relying mainly on probability mass and entropy while ignoring semantic geometry of the token space.
We present \emph{Top-$W$}, a geometry-aware truncation rule that uses Wasserstein distance—defined over token-embedding geometry—to keep the cropped distribution close to the original, while explicitly balancing retained probability mass against the entropy of the kept set. 
Our theory yields a simple closed-form structure for the fixed-potential subset update: depending on the mass–entropy trade-off, the optimal crop either collapses to a single token or takes the form of a one-dimensional prefix that can be found efficiently with a linear scan. 
We implement Top-$W$ using efficient geometry-based potentials (nearest-set or $k$-NN) and pair it with an alternating decoding routine that keeps the standard truncation-and-sampling interface unchanged.
Extensive experiments on four benchmarks (GSM8K, GPQA, AlpacaEval, and MT-Bench) across three instruction-tuned models show that \textbf{Top-$W$} consistently outperforms prior state-of-the-art decoding approaches achieving up to $33.7\%$ improvement. 
Moreover, we find that Top-$W$ not only improves accuracy-focused performance, but also boosts creativity under judge-based open-ended evaluation.\footnote{We release the code at: \url{https://github.com/arashgholami/top-w-decoding}}
\end{abstract}

\section{Introduction}
Large language models (LLMs) have made autoregressive text generation a central interface for modern NLP systems
\citep{vaswani2017attention,brown2020language,grattafiori2024llama,openai2025gpt52systemcard,google2025gemini25pro}.
While model scaling and post-training alignment strongly affect capability \citep{ouyang2022training,rafailov2023dpo,deepseek2025r1},
the \emph{decoding algorithm} used at inference time remains a critical (and often under-emphasized) determinant of factuality,
coherence, diversity, and perceived creativity. In practice, small changes in decoding (e.g., truncation rules,
entropy constraints, or temperature schedules) can shift outputs from repetitive and bland to imaginative yet unstable,
even for the same underlying model distribution \citep{holtzman2020,Hewitt2022Desmoothing,meister2022typical,Nguyen2024}.

Most widely used decoding schemes are probability-driven. Greedy decoding and beam search favor high-likelihood continuations,
often improving local consistency but producing generic, low-diversity text \citep{vijayakumar2018diverse}.
Stochastic sampling increases diversity, but can also reduce coherence and lead to degenerate behaviors \citep{holtzman2020}. To better manage this trade-off, practitioners commonly apply truncation heuristics such as top-$k$ sampling \citep{fan2018hierarchical}, nucleus (top-$p$) sampling \citep{holtzman2020},
and newer alternatives such as locally typical sampling \citep{meister2022typical} and Min-$p$ sampling \citep{Nguyen2024}.
Beyond truncation, contrastive decoding trades off likelihood and degeneration by comparing competing continuations
\citep{li2023contrastive,chuang2023dola}.

A complementary perspective is to view decoding as \emph{distribution shaping}: at each step, we transform the model’s next-token distribution into a truncated/renormalized distribution that better matches desired generation attributes. Recent work makes this view explicit by constraining distributional properties such as entropy. In particular, Top-$H$ decoding adapts creativity and coherence by enforcing a bounded-entropy principle over the next-token distribution \citep{potraghloo2025toph}. While entropy-based control provides a powerful knob for exploration--exploitation, it still treats tokens as unstructured categories. In contrast, natural language tokens live in an embedding space where distances encode graded semantic similarity, suggesting that \emph{geometry-aware} truncation could better preserve semantic continuity while still encouraging diverse continuations. Optimal transport (OT) offers a principled way to compare and regularize distributions while respecting such underlying geometry \citep{villani2009optimal,peyrecuturi2019ot}.

We introduce \textbf{Top-$W$ decoding}, a geometry-aware truncation rule that augments probability-based candidate selection with a Wasserstein-inspired objective over the next-token set. At each decoding step, given the model distribution $p\in\Delta^{|\cV|}$, Top-$W$ selects a \emph{crop} $S\subseteq\cV$ and samples from the renormalized distribution $q_S(i)=\frac{p_i}{\Gamma_S}\mathbf{1}\{i\in S\}$ with retained mass $\Gamma_S=\sum_{i\in S}p_i$. The crop is chosen by minimizing an interpretable objective $F_{\lambda,\beta}(S)$ (Eq.~\eqref{eq:main-objective}) that trades off: (i) \emph{faithfulness} via a geometry-aware transport penalty $W_1(p,q_S)$ under a token metric $d$ (computed from token embeddings), (ii) \emph{creativity} via $\lambda H(q_S)$, and (iii) \emph{mass control} via a $\beta$-term that discourages overly small $\Gamma_S$. Intuitively, Top-$W$ preserves high-probability mass while discouraging crops that concentrate probability on near-duplicate continuations in the same semantic neighborhood, yielding a controllable way to increase diversity without sacrificing coherence and complementing entropy-bounded decoding~\citep{potraghloo2025toph}.

Computing exact $W_1$ inside decoding is infeasible at vocabulary scale, so we optimize a tight dual-based surrogate that replaces the transport term with nearest-set distances
$\dist(i,S)=\min_{j\in S} d(i,j)$ (Lemma~\ref{lem:extremal-anchored}). This yields simple, geometry-aware per-token scores and leads to an efficient alternating procedure: an \emph{$f$-step} updates the potential (equivalently, the per-token geometric penalties/bonuses induced by the current crop), and an \emph{$S$-step} updates the crop. For fixed $f$, we show the optimal crop is \emph{prefix-form} after sorting tokens by the scalar score $i\mapsto f(i)+\lambda\log p_i$ (Theorem~\ref{thm:prefix-and-singleton}), so selecting $S$ reduces to scanning prefixes rather than searching over $2^{|\cV|}$ subsets. 
Moreover, Top-$W$ provides a unifying view: by appropriate choices of the underlying metric, Top-$W$ reduces to familiar mass/entropy truncations such as Top-$k$ and Top-$H$ (Section~\ref{sec:Uniform-metric_reductions}). 
We evaluate Top-$W$ at fixed temperature on GSM8K, GPQA, AlpacaEval, and MT-Bench (plus LLM-as-a-judge creative writing) across $15$ $(T,\text{model})$ settings with $T\in\{0.5,0.7,1.0,1.5,2.0\}$ and 3 LLMs.
Across these settings, Top-$W$ consistently outperforms prior state-of-the-art decoding approaches: $13/15$ (GSM8K), $12/15$ (GPQA), $12/15$ (AlpacaEval), and $8/15$ (MT-Bench), delivering gains in both accuracy and judged creativity, while adding only modest runtime overhead (ms/token).

\section{Preliminaries}
\label{sec:setup}
We fix a vocabulary $V=\{1,\dots,n\}$ and a ground metric $d:V\times V\to\R_{\ge 0}$.
Let $E\in\R^{n\times m}$ be the model’s input embedding matrix with token embeddings $\hat e_i\in\R^m$.

\paragraph{Embedding-induced geometry.}
Top-$W$ measures how much probability mass must be \emph{moved} when cropping a distribution, via a Wasserstein-$1$ term that depends on a ground metric.
We choose a metric derived from token embeddings because embeddings are trained so that geometric proximity correlates with functional similarity in the model: tokens that play similar semantic/syntactic roles tend to have nearby representations, and moving mass between such tokens should incur smaller transport cost than moving mass between unrelated tokens.
This makes $\Wone$ act as a \emph{semantic} penalty rather than a purely combinatorial one, and encourages cropping rules that discard low-probability tokens while preferentially retaining mass in neighborhoods of plausible alternatives.

\paragraph{Embedding-induced ground metric.}
We equip the vocabulary $V=\{1,\dots,n\}$ with an embedding-induced ground metric.
Let $\hat e_i\in\R^m$ be the input embedding of token $i$ and let $\tilde e_i^{\mathrm{white}}$ denote its diagonal-whitened, $\ell_2$-normalized version (Appendix~\ref{app:metric-details}).
We use the corresponding diagonal Mahalanobis distance
\begin{align}
d(i,j)\;\coloneqq\;\bigl\|\tilde e_i^{\mathrm{white}}-\tilde e_j^{\mathrm{white}}\bigr\|_2.
\label{eq:maha-metric-main}
\end{align}
Diagonal whitening reduces anisotropy in embedding space (so high-variance directions do not dominate distances) while keeping computation cheap and numerically stable.

\paragraph{Model distribution, cropping, and entropy.}
Given a distribution $p\in\Delta^{n-1}$ over $V$ and any nonempty $S\subseteq V$, define the retained mass
$\Gamma_S \coloneqq \sum_{i\in S} p_i$,
and the cropped, renormalized distribution
\begin{align}
q_S(i) \coloneqq \frac{p_i}{\Gamma_S}\,\one\{i\in S\}.
\label{eq:def-qS}
\end{align}
The Shannon entropy $\Hsh(\cdot)$ satisfies
\begin{align}
\Hsh(q_S)
=
-\frac{1}{\Gamma_S}\sum_{i\in S} p_i\log p_i
+\log\Gamma_S.
\label{eq:HqS}
\end{align}

\paragraph{Kantorovich--Rubinstein (KR) dual objects.}
Let $\mathcal{F}$ be the set of discrete $1$-Lipschitz potentials on $(V,d)$, so
\begin{align}
\Wone(P,Q)=\sup_{f\in\mathcal{F}}\bigl(\EE_P[f]-\EE_Q[f]\bigr).
\label{eq:KR-dual}
\end{align}

\section{Top-$W$ Decoding}
\label{sec:geom-decoding}
This section turns \emph{geometry-aware truncation} into a computable next-token rule.
In \S\ref{sec:objective-factorization}, we define the Wasserstein--entropy--mass objective
$F_{\lambda,\beta}(S)$ and prove an \emph{exact factorization} (Lemma~\ref{lem:factorization}), 
which makes the roles of geometry, entropy, and retained mass explicit.
Optimizing $F_{\lambda,\beta}(S)$ is challenging because computing $\Wone$ exactly would require solving the KR dual in \ref{eq:KR-dual}
(or equivalently the OT linear program) at each decoding step, which is infeasible at vocabulary scale \cite{peyrecuturi2019ot}. Instead, we propose a lightweight \emph{alternating} scheme: an \emph{$f$-step} that refines a feasible potential $f$
, and an \emph{$S$-step} that, for fixed $f$, optimizes a tight
lower-bound surrogate in closed form. In \S\ref{sec:dual-surrogate} we focus on the \emph{$S$-step} and show that for any fixed feasible $f$, minimizing the surrogate over $S$
is equivalent to maximizing a simple set function $G_f(S)$ (Lemma~\ref{lem:fixedf-sstep}), which depends on $S$ only through
its retained mass $\Gamma_S$.
This yields a sharp structural characterization of the optimizer (Theorem~\ref{thm:prefix-and-singleton}):
the optimal crop is obtained by scanning prefixes of tokens sorted by $\phi_i$, giving an $O(n)$ update rather than an $O(2^n)$ search
(over a vocabulary of size $n=|\cV|$).
Moreover, we show monotonicity of the selected retained mass in
$\beta$ (Corollary~\ref{cor:beta-monotone-main}),
which directly motivates our logits-processor implementation.

\subsection{Wasserstein--Entropy--Mass Objective and an Exact Factorization}
\label{sec:objective-factorization}

At each decoding step, the model defines a next-token distribution $p$ over the vocabulary $V$.
A truncation rule chooses a nonempty set $S\subseteq V$ and samples from the \emph{cropped and renormalized}
distribution
\[
q_S(i)\;=\;\frac{p_i}{\Gamma_S}\,\one\{i\in S\},
\qquad
\Gamma_S\;=\;\sum_{i\in S} p_i.
\]
Choosing $S$ is inherently a tradeoff: we would like the cropped distribution to (i) stay faithful to $p$,
(ii) avoid overly diffuse (high-entropy) supports that harm coherence, and (iii) not discard too much probability
mass, since removing high-mass tokens tends to degrade quality.

We encode these desiderata with the following objective. For any nonempty $S\subseteq V$, define
\begin{align}
F_{\lambda,\beta}(S)
&\coloneqq
\Wone(p,q_S)
+\lambda\,\Hsh(q_S)
-\beta\log\Gamma_S,
\label{eq:main-objective}
\end{align}
with $\lambda\ge 0$ and $\beta\ge 0$.\footnote{
\textbf{Scale invariance (metric calibration).}
If the ground metric is rescaled by a constant factor, $d'(i,j)=\alpha\,d(i,j)$ with $\alpha>0$,
then $\Wone^{d'}(P,Q)=\alpha\,\Wone^{d}(P,Q)$ for all $P,Q$.
Consequently, for any $S\neq\emptyset$,
\[
F^{d'}_{\alpha\lambda,\alpha\beta}(S)=\alpha\,F^{d}_{\lambda,\beta}(S).
\]
Thus rescaling the embedding metric can be absorbed by $(\lambda,\beta)\mapsto(\alpha\lambda,\alpha\beta)$
without changing the optimizer $S$.}

The three terms have a direct interpretation:
(i) $\Wone(p,q_S)$ penalizes distributional distortion (e.g., geometric distortion under an embedding-based ground cost) introduced by cropping---it is small when the removed
probability mass lies near (in the ground metric) the mass that remains; 
(ii) $\lambda\,\Hsh(q_S)$ penalizes \emph{diffuse} crops and favors sharper, lower-entropy supports;
and (iii) $-\beta\log\Gamma_S$ is a \emph{mass reward} that discourages discarding too much probability, with a
diminishing-returns effect through the logarithm. We therefore \emph{minimize} $F_{\lambda,\beta}(S)$: a good crop
should simultaneously be geometrically close to $p$, low-entropy after renormalization, and high-mass.

\medskip
A key identity makes the Wasserstein term especially interpretable by separating \emph{how much} mass is removed
from \emph{how far} the removed mass is from the retained mass.

\begin{lemma}[Exact factorization]
\label{lem:factorization}
Let $S\subseteq V$ with $\Gamma_S\in(0,1)$ and define $S^c$ as its complement in $V$. Then
\begin{align}
\Wone(p,q_S)
&=
(1-\Gamma_S)\,
\Wone\!\bigl(p(\cdot\mid S^c),\,p(\cdot\mid S)\bigr),
\label{eq:factorization}
\end{align}
where $p(\cdot\mid S)$ and $p(\cdot\mid S^c)$ are the conditional distributions of $p$ restricted to $S$
and $S^c$, respectively.
\end{lemma}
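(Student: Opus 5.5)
The plan is to work entirely through the Kantorovich--Rubinstein dual \eqref{eq:KR-dual} and reduce the claim to a pointwise identity between the signed measures $p-q_S$ and $(1-\Gamma_S)\bigl(p(\cdot\mid S^c)-p(\cdot\mid S)\bigr)$. Since $\Wone(P,Q)$ depends on $(P,Q)$ only through the difference $P-Q$ (the dual is a supremum of $\sum_i f(i)(P_i-Q_i)$ over $\mathcal{F}$), and is positively homogeneous in that difference, equality of the two signed measures immediately gives \eqref{eq:factorization}.

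The key computation is therefore coordinatewise. Write $p(i\mid S)=p_i\one\{i\in S\}/\Gamma_S$ and $p(i\mid S^c)=p_i\one\{i\in S^c\}/(1-\Gamma_S)$; both are well defined because $\Gamma_S\in(0,1)$. Note also that $q_S=p(\cdot\mid S)$. For $i\in S^c$ we have $p_i-q_S(i)=p_i$, while $(1-\Gamma_S)\bigl(p(i\mid S^c)-p(i\mid S)\bigr)=p_i-0=p_i$. For $i\in S$ we have $p_i-q_S(i)=p_i(1-1/\Gamma_S)=-p_i(1-\Gamma_S)/\Gamma_S$, and $(1-\Gamma_S)\bigl(0-p_i/\Gamma_S\bigr)$ gives the same value. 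Equivalently, $p=\Gamma_S\,p(\cdot\mid S)+(1-\Gamma_S)\,p(\cdot\mid S^c)$, so
\[
p-q_S=(1-\Gamma_S)\bigl(p(\cdot\mid S^c)-p(\cdot\mid S)\bigr).
\]

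Plugging this into \eqref{eq:KR-dual}, for every $f\in\mathcal{F}$ we get $\EE_p[f]-\EE_{q_S}[f]=(1-\Gamma_S)\bigl(\EE_{p(\cdot\mid S^c)}[f]-\EE_{p(\cdot\mid S)}[f]\bigr)$. Taking the supremum over $f$ and pulling out the nonnegative constant $1-\Gamma_S$ yields the lemma.

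There is no substantive obstacle here. The only points needing care are that the factor $1-\Gamma_S$ is strictly positive, so the supremum commutes with scaling, and that both conditionals are genuine probability measures. A primal sanity check is also possible: keep the mass $\Gamma_S\,p(\cdot\mid S)$ in place and transport the remaining $(1-\Gamma_S)$ fraction along an optimal coupling between $p(\cdot\mid S^c)$ and $p(\cdot\mid S)$. This gives the upper bound directly, and the dual argument supplies the matching lower bound. The dual identity alone already suffices.
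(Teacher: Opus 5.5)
Your proof is correct and is essentially the paper's argument: the paper also fixes $f\in\mathcal{F}$, rewrites $\EE_p[f]-\EE_{q_S}[f]$ as $(1-\Gamma_S)\bigl(\EE_{p(\cdot\mid S^c)}[f]-\EE_{p(\cdot\mid S)}[f]\bigr)$, and takes the supremum using the KR dual. Your signed-measure identity $p-q_S=(1-\Gamma_S)\bigl(p(\cdot\mid S^c)-p(\cdot\mid S)\bigr)$ and the primal coupling check are simply a cleaner presentation of that same computation.
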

$\Wone(p,q_S)$ separates into (i) the amount of removed probability mass, $(1-\Gamma_S)$
, and (ii) the geometric distance between the removed and retained conditional distributions. For proof of Lemma \ref{lem:factorization}, see Appendix~\ref{app:factorization-proof}. By Lemma~\ref{lem:factorization} and the entropy identity~\eqref{eq:HqS}, minimizing~\eqref{eq:main-objective}
is equivalent to minimizing the expanded form
\begin{align}
&\min_S F_{\lambda,\beta}(S)~~\text{where}
\notag\\
F_{\lambda,\beta}(S)
&=
(1-\Gamma_S)\,
\Wone\!\bigl(p(\cdot\mid S^c),\,p(\cdot\mid S)\bigr)
\notag\\
&\quad
+(\lambda-\beta)\log\Gamma_S
-\frac{\lambda}{\Gamma_S}\sum_{i\in S}p_i\log p_i.
\label{eq:F-expanded}
\end{align}


\subsection{Dual Surrogate and an Exact $S$-Step for Fixed Potential}
\label{sec:dual-surrogate}
\label{subsec:fixedf-objective}
The next lemma states that, for fixed $f$, the $S$-step reduces to maximizing a
simple set function depending on the retained mass $\Gamma_S$ and $\{\phi_i(f)\}_{i\in V}$ where \emph{combined score}  is defined as $\phi_i(f) \overset{\underset{\mathrm{def}}{}}{=}f_i+\lambda\log p_i$.

\begin{lemma}[Fixed-$f$ $S$-step as normalized score maximization]
\label{lem:fixedf-sstep}
Fix any feasible $f\in\mathcal{F}$ and any subset $S\subseteq V$, and let
$\Gamma_S\coloneqq\sum_{i\in S}p_i$. Then $F_{\lambda,\beta}(S)$ admits the lower
bound
\begin{align}
F_{\lambda,\beta}(S)\;\ge\; C_f \;-\; G_f(S),
\label{eq:fixedf-lb}
\end{align}
where $C_f=\sum_i p_if_i$ is independent of $S$, and
\begin{align}
G_f(S)
&\coloneqq
\frac{1}{\Gamma_S}\sum_{i\in S}p_i\,\phi_i(f)
+(\beta-\lambda)\log\Gamma_S.
\label{eq:Gf-def}
\end{align}
Consequently, for fixed $f$, minimizing the surrogate bound is equivalent to
\begin{align}
\Argmin_{S\subseteq V}\ \bigl(C_f-G_f(S)\bigr)
&=
\Argmax_{S\subseteq V}\ G_f(S).
\label{eq:Sstep-max}
\end{align}
\end{lemma}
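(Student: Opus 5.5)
The plan is to lower-bound the Wasserstein term by the Kantorovich--Rubinstein dual \eqref{eq:KR-dual} with the fixed potential $f$, and then combine it with the entropy identity \eqref{eq:HqS}.

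First, since $f\in\mathcal{F}$ is feasible (1-Lipschitz), the supremum in \eqref{eq:KR-dual} is at least its value at this particular $f$:
\[
\Wone(p,q_S)\;\ge\;\EE_p[f]-\EE_{q_S}[f]
\;=\;\sum_i p_i f_i-\frac{1}{\Gamma_S}\sum_{i\in S}p_i f_i .
\]
This holds for every nonempty $S$. No appeal to Lemma~\ref{lem:factorization} is needed, and the boundary case $\Gamma_S=1$ is covered too. The first sum is $C_f$, which does not depend on $S$.

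Second, substitute \eqref{eq:HqS} for the entropy term:
\[
\lambda\Hsh(q_S)=-\frac{\lambda}{\Gamma_S}\sum_{i\in S}p_i\log p_i+\lambda\log\Gamma_S .
\]
Adding $-\beta\log\Gamma_S$ and the bound from the first step gives
\[
F_{\lambda,\beta}(S)\;\ge\;C_f-\frac{1}{\Gamma_S}\sum_{i\in S}p_i\bigl(f_i+\lambda\log p_i\bigr)-(\beta-\lambda)\log\Gamma_S .
\]
Under the sign convention of \eqref{eq:F-expanded}, the entropy term is penalized by $+\lambda\Hsh(q_S)$. Hence the combined coefficient in the display above must come out as $-\lambda\log p_i$ inside the normalized sum. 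To match $\phi_i(f)=f_i+\lambda\log p_i$ in \eqref{eq:Gf-def}, I need to track carefully how the signs of the $-\frac{\lambda}{\Gamma_S}\sum p_i\log p_i$ term and the $-\EE_{q_S}[f]$ term combine. Checking this sign bookkeeping against the stated $G_f$ is the main (if routine) obstacle. If it does not match directly, I would reconcile it by noting that $f$ ranges over a symmetric class: $-f\in\mathcal{F}$ whenever $f\in\mathcal{F}$, so one may use $\EE_{q_S}[f]-\EE_p[f]$ instead. The convention for $f$ can then be chosen so that the score reads $f_i+\lambda\log p_i$, with $C_f$ adjusted accordingly.

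Finally, $C_f$ is constant in $S$, so minimizing $C_f-G_f(S)$ over nonempty $S$ is the same as maximizing $G_f(S)$. This gives \eqref{eq:Sstep-max}. The lower bound is tight whenever $f$ attains the KR supremum for $q_S$, which justifies calling it a surrogate.
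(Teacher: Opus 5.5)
Your argument is correct and is the same as the paper's proof: bound $\Wone(p,q_S)$ from below by the KR objective at the fixed $f$, substitute \eqref{eq:HqS}, collect terms, and discard the $S$-independent constant $C_f$. The sign check you flag as the main obstacle is already settled by your own display, because $-\EE_{q_S}[f]$ and the entropy contribution $-\frac{\lambda}{\Gamma_S}\sum_{i\in S}p_i\log p_i$ combine into $-\frac{1}{\Gamma_S}\sum_{i\in S}p_i\phi_i(f)$, so the right-hand side is literally $C_f-G_f(S)$. You should therefore drop the $f\mapsto -f$ fallback: it is unnecessary, and it could not have changed the sign of the entropy-derived $\lambda\log p_i$ term anyway, since it only flips $f_i$ and $C_f$, which would contradict $C_f=\sum_i p_i f_i$.
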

\noindent The derivation is algebraic (expand $\EE_{q_S}[f]$ and $\Hsh(q_S)$ under
$q_S(i)=p_i\mathbf{1}\{i\in S\}/\Gamma_S$) and is deferred to
Appendix~\ref{app:fixedf-derivation}.
\begin{remark} Lemma~\ref{lem:fixedf-sstep} turns the $S$-step (for fixed $f$) into maximizing a single, easy-to-evaluate
score $G_f(S)$. Instead of searching over all subsets $S\subseteq V$ (which is exponential in the vocabulary
size), the objective depends on $S$ only through the retained mass $\Gamma_S$ and a normalized (renormalized)
average of the combined scores over the kept tokens. This is exactly the structure that lets us solve the
$S$-step efficiently by a simple scan over candidates (Theorem~\ref{thm:prefix-and-singleton}), i.e., in
$\mathcal{O}(n)$ time per decoding step (up to sorting), where $n\coloneqq |V|$.
\end{remark}


\begin{theorem}[Exact fixed-$f$ $S$-step: prefix regime vs.\ singleton regime]
\label{thm:prefix-and-singleton}
Fix $f\in\mathcal{F}$ and define the combined scores $\phi_i\coloneqq f_i+\lambda\log p_i$.
Sort indices so that $\phi_{(1)}\ge \phi_{(2)}\ge \cdots \ge \phi_{(n)}$, and let
$S_k\coloneqq\{(1),\dots,(k)\}$ denote the top-$k$ prefix in this order.
For any nonempty $S\subseteq[n]$ write $\Gamma(S)\coloneqq\sum_{i\in S}p_i$ and
$\Phi(S)\coloneqq\sum_{i\in S}p_i\phi_i$, and define 
\begin{align}
G_f(S)\coloneqq \frac{\Phi(S)}{\Gamma(S)}+(\beta-\lambda)\log \Gamma(S).
\label{eq:G-def-friendly}
\end{align}
Then:
\begin{enumerate}
\item \textbf{Prefix optimality when $\beta\ge\lambda$.}
If $c\coloneqq\beta-\lambda\ge 0$, an optimizer of $G_f(S)$ exists among the prefixes:
\begin{align}
\max_{S\neq\emptyset}G_f(S)\;=\;\max_{k\in[n]} G_f(S_k),
\label{eq:prefix-reduction}
\end{align}
so the exact $S$-step reduces to a one-dimensional scan over $k$.

\item \textbf{Singleton collapse when $\beta\le\lambda$.}
If $c\le 0$, there exists an optimal \emph{singleton} set $S=\{i^\star\}$, i.e.,
\begin{align}
\max_{S\neq\emptyset}G_f(S)\;=\;\max_{i\in[n]}\Bigl\{\phi_i+c\log p_i\Bigr\}.
\label{eq:singleton-value-friendly}
\end{align}
\end{enumerate}
\end{theorem}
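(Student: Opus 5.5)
For part~2 ($c\le 0$) I would use a direct averaging argument. Write $w_i\coloneqq p_i/\Gamma(S)$ for $i\in S$, so that $\sum_{i\in S}w_i=1$. Then
\begin{align*}
G_f(S)=\sum_{i\in S}w_i\bigl(\phi_i+c\log\Gamma(S)\bigr)\le\sum_{i\in S}w_i\bigl(\phi_i+c\log p_i\bigr)\le\max_{i\in S}\bigl\{\phi_i+c\log p_i\bigr\}.
\end{align*}
The first inequality holds because $\Gamma(S)\ge p_i$ for every $i\in S$ and $c\le 0$. The second holds because a convex combination is at most its largest term. The bound is attained, since for a singleton $G_f(\{i\})=\phi_i+c\log p_i$ exactly. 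This gives \eqref{eq:singleton-value-friendly} and finishes part~2.

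For part~1 ($c\ge 0$) I would use an exchange/closure argument driven by exact discrete marginal formulas. For nonempty $S$ with $\Gamma=\Gamma(S)$ and average score $A=\Phi(S)/\Gamma$, the effect of adding $j\notin S$ is
\begin{align*}
G_f(S\cup\{j\})-G_f(S)=\frac{p_j(\phi_j-A)}{\Gamma+p_j}+c\log\Bigl(1+\frac{p_j}{\Gamma}\Bigr).
\end{align*}
The effect of removing $i\in S$ (with $S\neq\{i\}$) is
\begin{align*}
G_f(S\setminus\{i\})-G_f(S)=\frac{p_i(A-\phi_i)}{\Gamma-p_i}+c\log\Bigl(1-\frac{p_i}{\Gamma}\Bigr).
\end{align*}
The key monotonicity facts are these: adding is beneficial whenever $\phi_j\ge A$, because both terms are then nonnegative when $c\ge0$. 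Moreover, the gain from adding is increasing in $\phi_j$, and the gain from removing is decreasing in $\phi_i$.

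The first step is an upward closure. Starting from an optimizer $S$, I replace it by $S\cup\{j:\phi_j\ge A(S)\}$. The new set mixes $S$ with elements whose scores are at least $A(S)$, so its average is still at least $A(S)$, and its mass is larger. Hence $G_f$ does not decrease. Iterating, the average is nondecreasing and the set changes finitely often, so we reach an optimizer $T$ that contains the entire upper level set $\{j:\phi_j\ge A(T)\}$. That upper level set is itself a prefix $S_m$. Consequently $T=S_m\cup R$, where every element of $R$ has score strictly below $A(T)$.

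The second step, which I expect to be the main obstacle, is to show that $R$ can be taken to be an initial block of the remaining order, i.e.\ that $T$ can be made a prefix. Suppose $r\in R$ and $j\notin T$ with $\phi_j\ge\phi_r$. I would argue that at optimality one of two moves must be weakly improving: either removing $r$, or adding $j$. Taking both gains to be $\le 0$ gives the condition $\phi_r\ge A-c\,\Gamma/p_r\cdot(\ldots)$ together with the reverse inequality for $j$. Combined with $\phi_j\ge\phi_r$, I would try to reach a contradiction unless the relevant ties hold.

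If the discrete inequalities do not mesh cleanly because the masses $p_r\ne p_j$, my fallback is a continuous relaxation. I would allow fractional inclusion $x_i\in[0,1]$ and write
\begin{align*}
G=\frac{\sum_i x_ip_i\phi_i}{\sum_i x_ip_i}+c\log\sum_i x_ip_i.
\end{align*}
At a fixed mass $\gamma$ the average is maximized by the greedy fractional prefix, so the relaxed problem becomes one-dimensional in $\gamma$. I would then show that along each segment between consecutive prefix masses, $\gamma\mapsto G$ is quasi-convex. Its maximum on each segment is therefore attained at an endpoint, i.e.\ at an integral prefix. Since every integral set is feasible for the relaxation, this yields \eqref{eq:prefix-reduction}.
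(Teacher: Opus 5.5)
Your part~2 is the paper's argument. The paper bounds the average by $\max_{i\in S}\phi_i$ and then uses $c\log\Gamma(S)\le c\log p_i$; you move $c\log\Gamma(S)$ inside the average first, which is the same estimate.

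For part~1, your proof is really carried by the fallback, and the fallback is exactly the paper's proof. For $\Gamma_{k-1}<m=\Gamma(S)\le\Gamma_k$ the paper shows $\Phi(S)\le\phi_{(k)}m+C_k$ with $C_k=\Phi_{k-1}-\phi_{(k)}\Gamma_{k-1}\ge 0$. That bound is precisely the value of your greedy fractional prefix at mass $m$. The paper then maximizes $F_k(x)=\phi_{(k)}+C_k/x+c\log x$ over $[\Gamma_{k-1},\Gamma_k]$.

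The quasi-convexity you defer is one line, and you should write it out. The derivative is $F_k'(x)=(cx-C_k)/x^2$, and its numerator is nondecreasing when $c\ge 0$. So $F_k$ is nonincreasing and then nondecreasing, and its maximum is at an endpoint. There $F_k(\Gamma_k)=\Phi_k/\Gamma_k+c\log\Gamma_k$, and $F_k(\Gamma_{k-1})$ is the previous prefix value. On the first segment $C_1=0$, so $F_1$ is nondecreasing and the endpoint $0$ is harmless.

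The exchange route you lead with is not a proof as written, for two reasons.
\begin{enumerate}
\item What you must refute is that both gains are \emph{strictly} negative. Assuming both are $\le 0$ is just optimality and yields no contradiction.
\item You give no reason why repeated moves end at a prefix.
\end{enumerate}

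Both points can be fixed, and the unequal masses do not matter. From $\frac{x}{1+x}\le\log(1+x)\le x$ and $c\ge 0$, your marginal formulas give two bounds:
\begin{itemize}
\item $G_f(S\cup\{j\})-G_f(S)\ge\frac{p_j}{\Gamma+p_j}(\phi_j-A+c)$;
\item $G_f(S\setminus\{r\})-G_f(S)\ge\frac{p_r}{\Gamma-p_r}(A-c-\phi_r)$.
\end{itemize}
So adding $j$ weakly helps whenever $\phi_j\ge A-c$, and removing $r$ weakly helps whenever $\phi_r\le A-c$. Since $\phi_j\ge\phi_r$, one of the two always applies.

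Apply this to an optimal non-prefix $T$, with $j$ its highest-ranked non-member and $r$ its lowest-ranked member, and prefer adding. If $T=\{r\}$ then $A=\phi_r\le\phi_j$, so adding applies and $T$ never becomes empty. By optimality the move has gain exactly $0$. It also strictly decreases the number of pairs (member, earlier-ranked non-member), so the process terminates at an optimal prefix. The upward-closure step is then unnecessary.

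This would be a genuinely different, local-exchange proof of~\eqref{eq:prefix-reduction}. The paper instead bounds every $S$ globally by the two neighbouring prefix values.
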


\noindent\textbf{Interpretation.}
In the prefix regime $\beta\ge\lambda$, the fixed-$f$ optimum is always a \emph{prefix} of the tokens
sorted by $\phi_i$: there exists $k^\star$ such that $S^\star=\{(1),\ldots,(k^\star)\}$.
Thus the $S$-step is not a combinatorial search over subsets, but a one-dimensional scan over prefix length $k$. If $\beta\le\lambda$ (singleton regime), the optimum collapses to a \emph{single token}:
there exists $i^\star$ such that $S^\star=\{i^\star\}$, so the $S$-step reduces to selecting the best
singleton according to~\eqref{eq:singleton-value-friendly}.

\paragraph{Proof idea (main body).}
Fix the retained mass $m=\Gamma(S)$. The term $\Phi(S)/\Gamma(S)$ is a $p$-weighted average of
$\{\phi_i\}_{i\in S}$, hence it is maximized (for a given mass budget) by taking the
largest $\phi_i$'s---which yields a prefix $S_k$. When $c=\beta-\lambda\ge 0$, the extra
term $c\log m$ is increasing in $m$, and the optimum occurs at one of the attainable
prefix masses, so scanning $k$ suffices. When $c\le 0$, the mass bonus becomes a mass
\emph{penalty}; combining the weighted-average bound with $\log \Gamma(S)\ge \log p_i$ for
$i\in S$ shows a best singleton achieves the optimum.
Full proof is provided in Appendix~\ref{app:prefix-proof}.


\begin{corollary}[Monotonicity in $\beta$ (fixed $f$, prefix regime)]
\label{cor:beta-monotone-main}
Fix $f\in\mathcal F$ and assume the prefix regime $\beta\ge\lambda$.
Let $k^\star(\beta)$ be any maximizer over prefixes in~\eqref{eq:prefix-reduction}.
Then the retained mass is \emph{nondecreasing} in $\beta$: if $\beta_1<\beta_2$
(with both $\beta_j\ge\lambda$), then: 
$
\Gamma\!\left(S_{k^\star(\beta_1)}\right)\;\le\;\Gamma\!\left(S_{k^\star(\beta_2)}\right)
$.
Equivalently, increasing $\beta$ cannot decrease the selected prefix size / retained mass.
\end{corollary}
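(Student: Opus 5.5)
The plan is a standard revealed-preference (exchange) argument on the prefix objective. For each $k\in[n]$ write $m_k\coloneqq\Gamma(S_k)$ and $A_k\coloneqq\Phi(S_k)/\Gamma(S_k)$. Then for $c=\beta-\lambda\ge 0$,
\[
G_f(S_k)=A_k+c\log m_k .
\]
Crucially, $A_k$ and $m_k$ depend only on $f$, $p$ and $\lambda$, not on $\beta$. The ordering of tokens by $\phi_i=f_i+\lambda\log p_i$ is also independent of $\beta$, so the family of prefixes $\{S_k\}$ is the same for $\beta_1$ and $\beta_2$. By Theorem~\ref{thm:prefix-and-singleton}(1), both $k_1\coloneqq k^\star(\beta_1)$ and $k_2\coloneqq k^\star(\beta_2)$ are maximizers over this common family, with $c_j=\beta_j-\lambda$ and $c_1<c_2$.

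Optimality of $k_1$ at $c_1$ and of $k_2$ at $c_2$ gives
\begin{align*}
A_{k_1}+c_1\log m_{k_1}&\ge A_{k_2}+c_1\log m_{k_2},\\
A_{k_2}+c_2\log m_{k_2}&\ge A_{k_1}+c_2\log m_{k_1}.
\end{align*}
Adding the two inequalities cancels the $A$ terms and leaves
\[
(c_2-c_1)\bigl(\log m_{k_2}-\log m_{k_1}\bigr)\ge 0 .
\]
Since $c_2>c_1$ and $\log$ is strictly increasing on $(0,1]$, this yields $m_{k_1}\le m_{k_2}$, which is the claimed inequality $\Gamma(S_{k^\star(\beta_1)})\le\Gamma(S_{k^\star(\beta_2)})$.

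The "equivalently" clause follows because $m_k$ is nondecreasing in $k$ (all $p_i\ge0$), so a larger prefix mass means a prefix at least as long. One technicality concerns ties. If $p_{(k+1)}=0$, then $m_k$ is constant across several prefixes, so the mass claim holds as stated but the index $k^\star$ need not be monotone. I would state the size monotonicity in terms of retained mass, or under the assumption $p_i>0$ (true for softmax outputs), under which $m_k$ is strictly increasing and the mass ordering transfers to the indices.

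The only delicate step is checking that the argument is valid for \emph{any} choice of maximizers $k^\star(\beta_j)$, not just a particular selection. The exchange inequalities use only that each $k_j$ is optimal at its own $c_j$, so arbitrary tie-breaking is allowed and no lexicographic selection is needed. It must also be confirmed that $m_k>0$ for every $k$, so that $\log m_k$ is finite; this holds because every prefix contains the token with the largest $\phi_i$. That token has $p_i>0$ whenever $\lambda>0$, since $\phi_i=-\infty$ otherwise. When $\lambda=0$, we restrict the prefix family to prefixes with $m_k>0$, as is implicit in the definition of $G_f$.
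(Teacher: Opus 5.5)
Your proposal is correct and is essentially the paper's own proof. The paper also writes $J_k(c)=A_k+c\log\Gamma_k$, adds the two optimality inequalities so the $A$-terms cancel, and concludes $\Gamma_{k_1}\le\Gamma_{k_2}$; it then transfers this to the indices under $p_i>0$, just as you do, and your extra remarks on arbitrary tie-breaking and on zero-probability tokens are valid refinements.
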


\noindent The proof of Corollary~\ref{cor:beta-monotone-main} is given in Appendix~\ref{app_m}.

\section{Computing Potentials and the Implemented Alternating Decoder}
\label{sec:dual-compute}
Section~\ref{sec:dual-surrogate} showed a key structural fact: for any \emph{fixed}
feasible potential $f\in\mathcal F$, the subset update
\(
S \leftarrow \Argmax_{S\neq\emptyset} G_f(S)
\)
is \emph{exactly} solvable by a one-dimensional prefix scan
(Theorem~\ref{thm:prefix-and-singleton}). What remains is how to obtain a useful
feasible $f$ during decoding.

In principle, one would like to use a KR-optimal dual potential for the current pair
$(p,q_S)$, but this is too expensive to compute at each decoding step, see Appendix \ref{app:KR_why}. Our implemented
decoder therefore uses a cheap geometry-aware construction of $f$ and alternates:
\emph{(i)} update $f$ from the current $S$; \emph{(ii)} update $S$ exactly for that $f$.

\subsection{A Simple Geometry-Anchored Feasible Potential}
\label{subsec:cheap-potentials}

The fixed-$f$ analysis in Section~\ref{sec:dual-surrogate} holds for \emph{any} feasible
potential $f\in\mathcal F$ (the set of discrete $1$-Lipschitz functions on $(V,d)$).
Rather than solving a costly OT dual to obtain $f$, we pick a \emph{cheap} potential that
(i) is guaranteed feasible, and (ii) injects geometry (so truncation can depend on
\emph{semantic proximity under $d$}, not only ordered probabilities as in Section ~\ref{sec:objective-factorization})
through distances to the current set $S$.

\begin{lemma}[Shift invariance of the fixed-$f$ $S$-step]
\label{lem:shift-invariance}
Fix $f\in\mathcal F$ and define $f' \coloneqq f + c\mathbf{1}$ for any constant $c\in\R$.
Then the fixed-$f$ subset objective has the same maximizers under $f$ and $f'$.
\end{lemma}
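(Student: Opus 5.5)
The plan is to compute directly how $G_f$ from \eqref{eq:Gf-def} (equivalently \eqref{eq:G-def-friendly}) changes under the shift $f'=f+c\mathbf{1}$, and show that it changes by an additive constant independent of $S$. We first check that the shifted potential is still admissible. The $1$-Lipschitz condition $|f_i-f_j|\le d(i,j)$ depends only on differences, so $f'\in\mathcal F$ and the fixed-$f$ analysis of Lemma~\ref{lem:fixedf-sstep} and Theorem~\ref{thm:prefix-and-singleton} applies to $f'$ as well.

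Next, the combined scores shift uniformly: $\phi_i(f')=f_i+c+\lambda\log p_i=\phi_i(f)+c$. For any nonempty $S$ with $\Gamma_S>0$,
\[
\frac{1}{\Gamma_S}\sum_{i\in S}p_i\,\phi_i(f')=\frac{1}{\Gamma_S}\sum_{i\in S}p_i\,\phi_i(f)+c\,\frac{\Gamma_S}{\Gamma_S},
\]
which equals the original normalized average plus $c$. The mass term $(\beta-\lambda)\log\Gamma_S$ does not involve $f$. Hence $G_{f'}(S)=G_f(S)+c$ for every admissible $S$, so $\Argmax_S G_{f'}(S)=\Argmax_S G_f(S)$.

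To be thorough about the surrogate $C_f-G_f(S)$ in \eqref{eq:Sstep-max}, note that $C_{f'}=\sum_i p_i(f_i+c)=C_f+c$ because $\sum_i p_i=1$. Therefore $C_{f'}-G_{f'}(S)=C_f-G_f(S)$ identically in $S$: the surrogate bound itself is unchanged, not only its maximizers.

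There is no real obstacle here. The one point needing care is the domain: sets with $\Gamma_S=0$ make $G_f$ undefined for both $f$ and $f'$, so the maximization is understood over the same family of sets with positive mass in both cases. One should also observe that the sorted order of the $\phi_i$ is preserved, so the prefixes $S_k$ in Theorem~\ref{thm:prefix-and-singleton} are identical under $f$ and $f'$. This gives consistency with the prefix scan, although the argument above does not need it.
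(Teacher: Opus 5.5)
Your proof is correct and follows the paper's argument: both show $\phi_i(f')=\phi_i(f)+c$, hence $G_{f'}(S)=G_f(S)+c$ for every nonempty $S$ (the mass term does not depend on $f$), so the maximizers and the $\phi$-ordering coincide. Your extra checks are also correct but not needed for the statement: that $f'\in\mathcal F$, and that $C_{f'}-G_{f'}(S)=C_f-G_f(S)$ identically in $S$.
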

For proof of Lemma \ref{lem:shift-invariance} see Appendix~\ref{app_maximizer}.
By Lemma~\ref{lem:shift-invariance}, we may normalize $f$ without changing the resulting crop.
We therefore \emph{anchor} the potential on the current set:
\begin{align}
f(j)=0,\qquad \forall j\in S.
\label{eq:anchor}
\end{align}

\begin{lemma}[Extremal anchored Lipschitz envelopes]
\label{lem:extremal-anchored}
Let $S\subseteq V$ be nonempty and define the distance-to-set function
$\dist(i,S)\coloneqq \min_{j\in S} d(i,j)$.
Then $\dist(\cdot,S)\in\mathcal F$ and $-\dist(\cdot,S)\in\mathcal F$.
Moreover, for any $f\in\mathcal F$ satisfying~\eqref{eq:anchor},
\begin{align}
-\dist(i,S)\;\le\; f(i)\;\le\; \dist(i,S),\qquad \forall i\in V.
\label{eq:envelope}
\end{align}
Equivalently, among all feasible anchored potentials, $\dist(\cdot,S)$ is pointwise maximal
and $-\dist(\cdot,S)$ is pointwise minimal (hence \emph{extremal}).
\end{lemma}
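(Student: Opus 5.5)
The plan has three parts: show that $\dist(\cdot,S)$ is $1$-Lipschitz, get $-\dist(\cdot,S)\in\mathcal F$ by symmetry, and read off the envelope directly from the Lipschitz condition. Throughout, $\mathcal F$ is the set of $f:V\to\R$ with $|f(i)-f(j)|\le d(i,j)$ for all $i,j\in V$. We use only that $d$ is a metric: nonnegative, symmetric, satisfying the triangle inequality, with $d(j,j)=0$.

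\emph{Feasibility of $\dist(\cdot,S)$.} Fix $i,i'\in V$. Since $S$ is finite and nonempty, the minimum defining $\dist(i',S)$ is attained at some $j'\in S$. The triangle inequality then gives
\begin{align*}
\dist(i,S)\le d(i,j')\le d(i,i')+d(i',j')=d(i,i')+\dist(i',S).
\end{align*}
Exchanging the roles of $i$ and $i'$ and using symmetry of $d$ yields $|\dist(i,S)-\dist(i',S)|\le d(i,i')$, so $\dist(\cdot,S)\in\mathcal F$. The Lipschitz condition is invariant under $f\mapsto -f$, hence $-\dist(\cdot,S)\in\mathcal F$ as well. Both functions also vanish on $S$, because $d(j,j)=0$, so they satisfy the anchoring~\eqref{eq:anchor}.

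\emph{Envelope bound.} Let $f\in\mathcal F$ with $f\equiv 0$ on $S$. For any $i\in V$ and any $j\in S$ we have
\begin{align*}
|f(i)|=|f(i)-f(j)|\le d(i,j).
\end{align*}
Taking the minimum over $j\in S$ gives $|f(i)|\le\dist(i,S)$, which is exactly~\eqref{eq:envelope}. The two extremal functions are themselves feasible and anchored, so they attain the bounds pointwise. This gives the pointwise maximality of $\dist(\cdot,S)$ and the pointwise minimality of $-\dist(\cdot,S)$.

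None of these steps is a real obstacle. The only point requiring care is the triangle-inequality step, where we need the minimizer $j'$ to exist; this holds because $V$ is finite. Alternatively, one can argue with an arbitrary $j'\in S$ and take the infimum afterwards.
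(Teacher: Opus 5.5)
Your proof is correct and follows the same route as the paper: you get the envelope from $|f(i)-f(j)|\le d(i,j)$ with $f(j)=0$, minimized over $j\in S$, and feasibility of $\dist(\cdot,S)$ from the triangle inequality and symmetry, with negation preserving the Lipschitz property. Your triangle-inequality step is in fact the valid form: you take the minimizer $j'$ for $\dist(i',S)$ and use $\dist(i,S)\le d(i,j')$. The paper's written proof instead takes $s^\star$ attaining $\dist(i,S)$ and then asserts $d(j,s^\star)\le\dist(j,S)$, which goes the wrong way. You also correctly note that $\pm\dist(\cdot,S)$ vanish on $S$, which the extremality claim among anchored potentials requires.
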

For proof of Lemma \ref{lem:extremal-anchored}, see Appendix~\ref{proof_of_Lemma_extermal}. Based on Lemma \ref{lem:extremal-anchored}, we would like to choose the \emph{most attractive} anchored potential, i.e., the pointwise \emph{minimum}
feasible extension under $f|_S\equiv 0$:
\begin{align}
f_S(i)\coloneqq -\,\dist(i,S),
\qquad i\in V.
\label{eq:attraction-potential}
\end{align}
This choice has two practical benefits:
(i) it is always feasible and costs only distance-to-set queries (no LP/OT solve),
and (ii) it produces the strongest geometric bias \emph{toward} $S$ allowed by $1$-Lipschitzness:
tokens farther from $S$ receive a more negative score. With this potential, the combined surrogate score becomes
\[
\phi_i(f_S) \;=\; f_S(i)+\lambda\log p_i \;=\; -\dist(i,S)+\lambda\log p_i,
\]
which prefers tokens that are both \emph{likely} (large $\log p_i$) and \emph{geometrically close}
to the current set $S$ (small $\dist(i,S)$).
(Using $+\dist(i,S)$ would instead be a repulsive variant that discourages proximity to $S$.)

\subsection{Alternating Decoder: Exact $S$-step Inside a Practical Loop}
\label{subsec:alt-decoder}
At a single decoding step, the alternating procedure maintains $S^{(0)},S^{(1)},\dots$
and repeats:
\begin{enumerate}
\item \textbf{(f-step)} build a feasible potential $f^{(t)}\in\mathcal{F}$ from $S^{(t)}$
(e.g., $f^{(t)}=f_{S^{(t)}}$ in~\eqref{eq:attraction-potential});
\item \textbf{(S-step)} update $S^{(t+1)}$ by \emph{exactly} maximizing $G_{f^{(t)}}(S)$
via the prefix scan of Theorem~\ref{thm:prefix-and-singleton}.
\end{enumerate}
A small, fixed number of alternations is used in practice.

\begin{algorithm}[t]
\caption{Top-$W$ alternating subset update}
\label{alg:topw}
\begin{algorithmic}[1]
\Require logits $\ell\in\R^n$; selection temperature $T_{\mathrm{sel}}>0$;
metric $d$ on $V$ (token geometry); parameters $\lambda\ge 0$, $\beta\ge 0$;
alternations $T_{\mathrm{alt}}\in N$.
\State Form $p$ by $p_i\propto \exp(\ell_i/T_{\mathrm{sel}})$.
\State Initialize $S^{(0)}$ using any probability-only rule (details deferred).
\For{$t=0$ \textbf{to} $T_{\mathrm{alt}}-1$}
  \State \textbf{(f-step)} $f^{(t)}_i \leftarrow -\dist(i,S^{(t)})$ for all $i$.
  \State \textbf{(S-step)} $\phi^{(t)}_i \leftarrow f^{(t)}_i+\lambda\log p_i$.
  \Statex\quad Sort so $\phi^{(t)}_{(1)}\ge\cdots\ge \phi^{(t)}_{(n)}$ and form prefix sums
  \Statex\quad $\Gamma_k\leftarrow \sum_{r=1}^k p_{(r)}$, \ \ $\Phi_k\leftarrow \sum_{r=1}^k p_{(r)}\,\phi^{(t)}_{(r)}$.
  \Statex\quad $J_k\leftarrow \frac{\Phi_k}{\Gamma_k}+(\beta-\lambda)\log(\Gamma_k)$.
  \Statex\quad $k^\star\in\Argmax_{k\in[n]} J_k$, \ \ $S^{(t+1)}\leftarrow \{(1),\dots,(k^\star)\}$.
  \If{$S^{(t+1)}=S^{(t)}$} \State \textbf{break} \EndIf
\EndFor
\State Mask logits: keep $\ell_i$ for $i\in S^{(t_{\mathrm{final}})}$ and set $-\infty$ otherwise.
\end{algorithmic}
\end{algorithm}

\paragraph{Candidate pool (\texttt{top\_m}).}
To avoid full-vocabulary work, Algorithm~\ref{alg:topw} restricts all computations
to a candidate pool $C$ containing the \texttt{top\_m} most probable tokens under $p$.
A sufficient condition under which this restriction is \emph{exact} for the fixed-$f$
$S$-step is stated and proved in Appendix~\ref{app:pool-exact}.

\subsection{Uniform-metric reductions}
\label{sec:Uniform-metric_reductions}
To connect Top-$W$ to standard \emph{probability-only} truncation rules, we consider the
uniform $(0-1)$ token metric $d_u(i,j)=\one\{i\neq j\}$, under which transport ignores
geometry and depends only on the \emph{retained mass} $\Gamma_S$.
In this regime, $\Wone(p,q_S)$ collapses to a simple function of $\Gamma_S$, and our
objective becomes a probability-only criterion (see Appendix~\ref{app:uniform-reductions}).
This yields clean reductions: (i) with a cardinality budget $|S|\le k$ and $\lambda=\beta=0$,
optimizing our objective recovers Top-$k$; (ii) with $\beta=0$, our objective is the
Lagrangian relaxation of the entropy-constrained mass maximization problem underlying
Top-$H$.

\subsection{Mass parameter $\beta$: practical guidance}
\label{sec:beta-guidance}
The mass parameter $\beta$ sets how strongly the $S$-update is rewarded for retaining probability mass (via $-\beta\log\Gamma_S$) relative to the entropy term: when $\beta\le\lambda$, the exact fixed-$f$ update can collapse to a singleton (a greedy, single-token crop), while for $\beta>\lambda$ it instead prefers a nontrivial prefix-style crop whose retained mass (and typically its size) increases as $\beta$ increases (Theorem~\ref{thm:prefix-and-singleton}). Practically, we therefore choose $\beta>\lambda$ to avoid repeated singleton collapse, then tune $\beta$ upward to reduce over-truncation (retain more mass; larger crops) or downward to increase sharpness (retain less mass; smaller crops), noting that pushing $\beta$ too close to $\lambda$ risks re-entering the degenerate regime.

\begin{table*}[!t]
\centering
\small
\resizebox{\linewidth}{!}{%
\begin{tabular}{l*{15}{c}}
\toprule
\textbf{Temp} &
\multicolumn{4}{c}{\textbf{Qwen2.5 3B}} &
\multicolumn{4}{c}{\textbf{LLaMA-3.1-8B-Inst.}} &
\multicolumn{4}{c}{\textbf{Phi-3-Mini}} \\
\cmidrule(lr){2-5}\cmidrule(lr){6-9}\cmidrule(lr){10-13}
& Min-$p$ & Top-$p$ & Top-$H$ & Top-$W$ & Min-$p$ & Top-$p$ & Top-$H$ & Top-$W$ & Min-$p$ & Top-$p$ & Top-$H$ & Top-$W$ \\
\midrule
0.5 & \bf{75.89} & 74.83 & 74.90 & 75.44 & 75.36 & 75.62 & 77.18 & \bf{77.26} & 84.53 & 85.06 & 85.47 & \bf{85.90} \\
0.7 & 74.53 & 75.70 & 74.40 & \bf{75.82} & 73.16 & 73.19 & \bf{77.63} & 77.15 & 83.62 & 84.76 & 84.69 & \bf{85.48} \\
1.0 & 72.40 & 71.27 & 75.97 & \bf{75.99} & 48.90 & 67.93 & 76.35 & \bf{76.72} & 81.96 & 81.35 &83.24 & \bf{85.92} \\
1.5 & 66.79 &  55.57 & 72.55 & \bf{75.18} & 58.00 & 23.81 & 70.51 & \bf{75.74} & 77.10 & 67.25 & 77.86 & \bf{85.32} \\
2.0 & 49.43 &  9.10 &  55.57 & \bf{75.13} & 13.72 &  2.65 &  39.35 & \bf{73.09} & 60.88 & 7.73 & 60.20 & \bf{84.63} \\
\bottomrule
\end{tabular}
}
\caption{GSM8K accuracy (\%) at $T\in\{0.5,0.7,1.0,1.5,2.0\}$ for Min-$p$, Top-$p$, Top-$H$, and Top-$W$. Aggregated over 3 runs (N=1319).}
\label{tab:gsm8k}
\minipostspace
\end{table*}

\begin{table*}[!t]
\centering
\small
\resizebox{\linewidth}{!}{%
\begin{tabular}{lcccccccccccc}
\toprule
\textbf{Temp} &
\multicolumn{4}{c}{\textbf{Qwen2.5 3B}} &
\multicolumn{4}{c}{\textbf{LLaMA3.1-8B-Inst.}} &
\multicolumn{4}{c}{\textbf{Phi-3-Mini}} \\
\cmidrule(lr){2-5}\cmidrule(lr){6-9}\cmidrule(lr){10-13}
& Min-$p$ & Top-$p$ & Top-$H$ & Top-$W$ & Min-$p$ & Top-$p$ & Top-$H$ & Top-$W$ & Min-$p$ & Top-$p$ & Top-$H$ & Top-$W$ \\
\midrule
0.5 & 29.24 & \bf{30.58} & 27.68 & \bf{30.58} & \bf{31.92} & 29.24 & 30.36 & 30.80 & 30.92 & 31.81 & \bf{32.14} & \bf{32.14} \\
0.7 & 27.01 & 27.46 & 27.68 & \bf{31.47} & 30.13 & 29.24 & \bf{30.58} & 29.24 & 28.57 & 29.91 & 32.59 & \bf{33.04} \\
1.0 & 28.35 & 27.68 & 28.79 & {\bf 30.02}& 26.34 & {\bf 32.81}  & 29.24  & 30.80 & 31.92 & 30.58 & {\bf 32.37} & {\bf 32.37} \\
1.5 &30.13 & 27.23 & 27.90 & {\bf 30.58} & 28.35 & 28.57 & 30.58 & {\bf33.26} & 29.91 & 28.57 & 30.80 &{\bf 31.13} \\
2.0 &25.00 & 22.32 & 28.12 & {\bf 28.46}  & 26.12 & 23.88 & 28.79 & {\bf 31.02} & 23.44 & 18.53 & 30.80 &  {\bf 31.64}\\
\bottomrule
\end{tabular}
}
\caption{GPQA accuracy at $T\in\{0.5,0.7,1.0,1.5,2.0\}$ for Min-$p$, Top-$p$, Top-$H$, and Top-$W$. Aggregated over 4 runs $(N=448)$.}
\label{tab:gpqa}
\minipostspace
\end{table*}

\section{Experimental Details}
\label{sec:exp_det}
\paragraph{Models and decoding methods}
We evaluate three LLMs---LLaMA-3.1-8B-Instruct~\citep{grattafiori2024llama},
Qwen2.5-3B-Instruct~\citep{yang2025qwen3}, and Phi-3-Mini-4K-Instruct~\citep{abdin2024phi3}---and compare
four sampling rules under identical temperature, prompts, and stopping criteria: Top-$p$ nucleus sampling \cite{holtzman2020}, Min-$p$ sampling \cite{Nguyen2024}, Top-$H$ (bounded-entropy cropping) \cite{potraghloo2025toph}, and our geometry-aware cropping method (Top-$W$).
All methods are implemented as logits processors and use identical prompts, max token budgets, and stopping rules.

\paragraph{Benchmarks and primary metrics}
We evaluate on both accuracy-centric and judge-based benchmarks:
GSM8K \cite{Cobbe2021GSM8K} and GPQA \cite{Rein2023GPQA} are scored by exact-match accuracy (\%) from extracted final answers (GSM8K) or the chosen option (GPQA).
For AlpacaEval, we report length-controlled win rates using the standard length-debiasing protocol \cite{Dubois2024AlpacaEval}.
For MT-Bench, we report the average judge score (1--10) following the canonical pipeline \cite{Zheng2023MTBench}.
To probe quality axes beyond task accuracy, we also run a rubric-based LLM-as-Judge evaluation on three creative prompts: for each prompt and temperature, we generate one response per method, score with a single judge across five rubrics (M1--M5), aggregate to an overall score, and randomize method order to mitigate positional bias. Appendix~\ref{app:judge_prompts} contains the prompt templates and rubrics.

\paragraph{Evaluation protocol}
All methods are evaluated at temperatures $T\in\{0.5,0.7,1.0,1.5,2.0\}$ with identical max-token budgets, prompts/format constraints, and answer-extraction rules; only the truncation/cropping rule differs.
We run GSM8K and GPQA via \texttt{lm-eval-harness} with benchmark-standard prompting and extraction \cite{eval-harness}.
For AlpacaEval, MT-Bench, and the rubric-based creative evaluation, we use GPT-4o as the judge and randomize candidate order per item (and per repeat, when applicable). More details including judge prompts is in Appendix \ref{app:judge_prompts}.

\paragraph{Top-$W$ configuration}
Top-$W$ applies cropping to the temperature-scaled next-token distribution and is controlled by $\lambda$ (entropy penalty) and $\beta$ (log-mass weight).
Unless stated otherwise, we fix $\lambda=2.2$ and $\beta=2.8$ (see \S\ref{experiment_sweep_} and Appendix~\ref{app:grid_search}). 
In implementation, we use a nucleus-style warm start, a fixed \texttt{top\_m}$=1200$ candidate pool, and \texttt{alt\_iters}$=3$ alternating refinement steps; these settings are fixed across tasks unless stated otherwise.

\section{Experiments}
\label{sec:experiments}
We evaluate whether \textbf{Top-$W$} consistently improves over Min-$p$, Top-$p$ and Top-$H$ under matched temperature, prompts, max-token budgets, and stopping criteria. We report exact-match accuracy on GSM8K and GPQA, and judge-based outcomes on AlpacaEval, MT-Bench, and a multi-axis creative-writing rubric. We additionally sweep $(\lambda,\beta)$ to probe sensitivity and the accuracy--diversity trade-off. For GSM8K and GPQA, we quote the Min-$p$, Top-$p$ and Top-$H$ accuracy results from \citet{potraghloo2025toph}. For AlpacaEval and MT-Bench, absolute scores depend on the judge model/prompt; we therefore run Min-$p$, Top-$p$, Top-$H$ (using their provided implementation), and Top-$W$ under our fixed judge configuration and protocol (Appendix~\ref{app:evaluation_protocol}). We use the authors’ released implementation of Top-$H$ and reproduce Min-$p$/Top-$p$ under the same evaluation pipeline; for GSM8K and GPQA we additionally quote baseline numbers reported in \citet{potraghloo2025toph}.

\begin{figure*}[t!]
  \centering
  \includegraphics[width=0.98\textwidth]{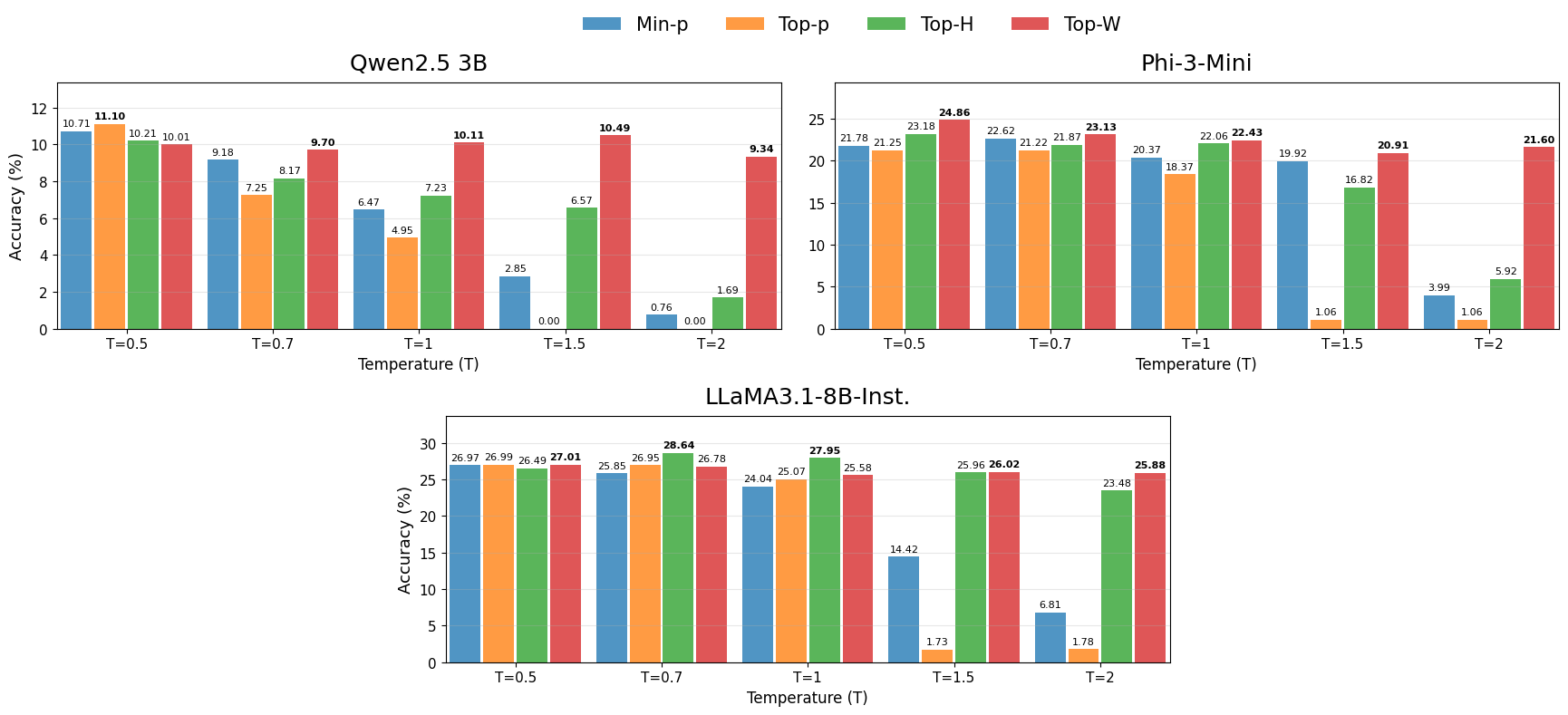}
  \caption{Alpaca accuracy across temperatures for Min-$p$, Top-$p$, Top-$H$, and Top-$W$ (aggregated over 4 runs). As we can see in the bar plot, Min-$p$, Top-$p$, Top-$H$ and Top-$W$ (our method) win in 0,1,2,12 tuples of $(T,model)$ out of 15 tuples, i.e., 5 temperatures $\times$ 3 models, respectively.}
  \label{fig:alpaca_bars}
\end{figure*}

\begin{figure*}[t]
  \centering
  \includegraphics[width=0.98\textwidth]{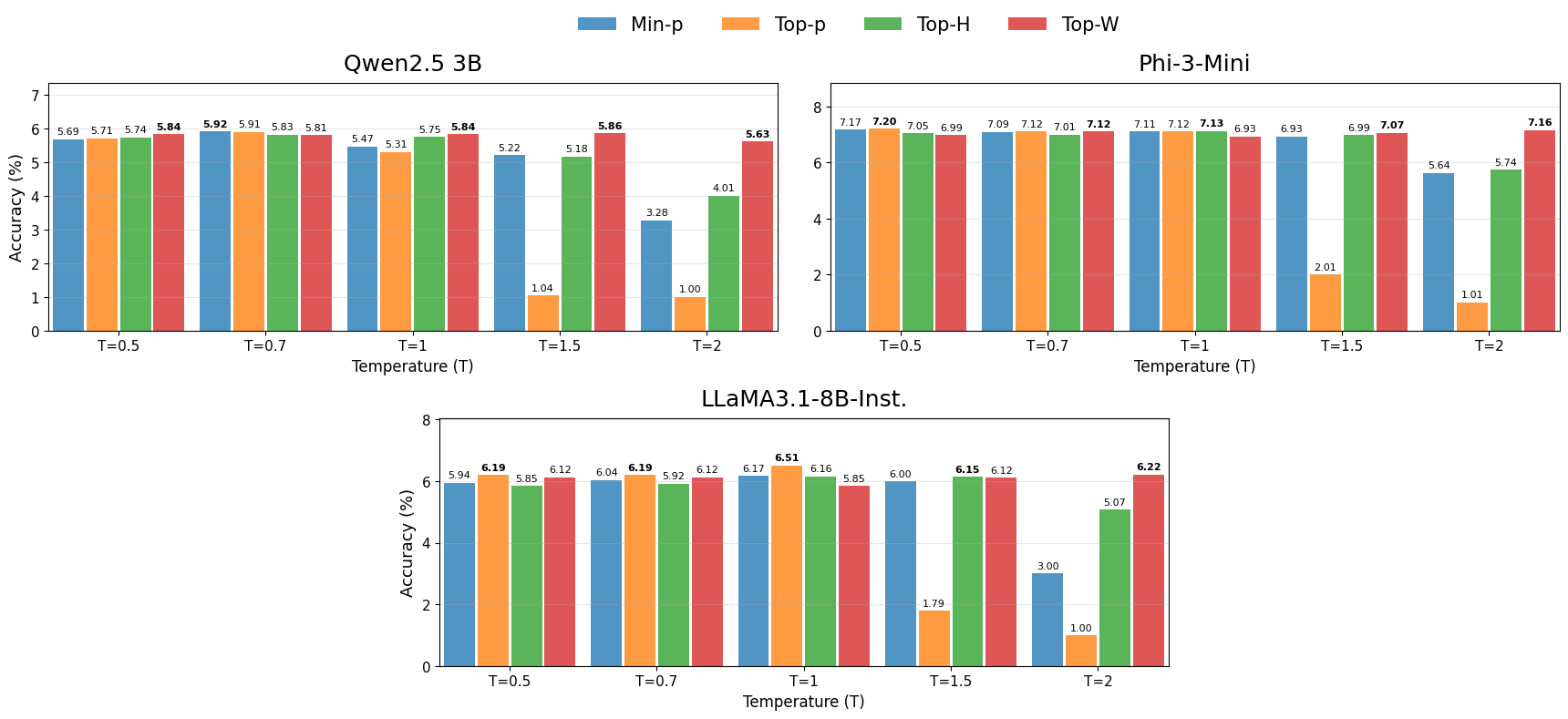}
  \caption{MT-Bench judge scores across temperatures for Min-$p$, Top-$p$, Top-$H$, and Top-$W$ (aggregated over 4 runs). As we can see in the bar plot, Min-$p$, Top-$p$, Top-$H$ and Top-$W$ (our method) win in 1,4,2,8 tuples of $(T,model)$ out of 15 tuples, i.e., 5 temperatures $\times$ 3 models, respectively.}
  \label{fig:mtbench_bars}
\end{figure*}

\subsection{Main Results}

\paragraph{Reasoning benchmarks (GSM8K, GPQA).}
Tables~\ref{tab:gsm8k} and~\ref{tab:gpqa} report exact-match accuracy across $T\in\{0.5,0.7,1.0,1.5,2.0\}$.
Across models, Min-$p$ and Top-$p$ degrade sharply as $T$ increases, while Top-$H$ and Top-$W$ are substantially more stable.
On GSM8K, Top-$W$ wins (or mutually wins) $13/15$ $(T,\text{model})$ settings across three models (Qwen2.5-3B, LLaMA-3.1-8B, Phi-3-Mini) and five temperatures $T\in\{0.5,0.7,1.0,1.5,2.0\}$; the improvement over Top-$H$ reaches up to $33.74\%$ at $T{=}2.0$.
Across all $T$, LLaMA-3.1-8B is the strongest absolute performer, while Qwen2.5-3B and Phi-3-Mini benefit more from Top-$W$'s stability at higher $T$.
GSM8K (multi-step arithmetic) is more sensitive to sampling noise, so probability-only truncators collapse faster than on GPQA, whereas Top-$W$/Top-$H$ preserve accuracy as $T$ rises.
On GPQA (knowledge-heavy multiple-choice), accuracies are lower and tighter across methods, but Top-$W$ still delivers the most consistent gains, winning (or mutually winning) $12/15$    \emph{$(T,\text{model})$ settings}: it dominates for all three models at $T\in\{1.5,2.0\}$ and remains top or tied at $T{=}1.0$, while margins shrink at $T\in\{0.5,0.7\}$ where baselines such as Min-$p$ (LLaMA-3.1-8B, $T{=}0.5$) and Top-$H$ (LLaMA-3.1-8B, $T{=}0.7$) occasionally win.

\paragraph{Instruction-following and chat (AlpacaEval, MT-Bench).}
Figures~\ref{fig:alpaca_bars}--\ref{fig:mtbench_bars} summarize judge-based performance across temperatures and models.
Top-$W$ achieves the strongest overall scores and remains robust as $T$ increases; it also wins the majority of $(T,\text{model})$
tuples (e.g., $12/15$ on AlpacaEval and $8/15$ on MT-Bench in our reported aggregates).
We observe larger method gaps on MT-Bench than AlpacaEval for some models, consistent with MT-Bench's multi-turn coherence and instruction retention being more brittle under increased stochasticity.
AlpacaEval often rewards single-turn helpfulness/style, while MT-Bench more strongly penalizes drift and inconsistency, which amplifies the value of Top-$W$'s geometry-aware truncation at higher $T$.

\paragraph{Creative writing rubric.}
To probe fine-grained creative quality beyond pairwise win rates, we adopt an LLM-as-a-judge rubric (following \citet{Nguyen2024}) on a fixed set of three open-ended storytelling prompts.
The judge assigns 1--10 scores on five dimensions—\emph{diversity} (novelty/uniqueness), \emph{originality}, \emph{narrative flow} (coherence), \emph{emotional impact}, and \emph{imagery}—and also selects an overall winner.
We use GPT-4o as the judge, randomize the presentation order of methods to reduce positional bias, and report mean$\pm$std over repeats for each (LLM, $T$, prompt) configuration.
For brevity, we move the full prompt- and LLM-specific tables to Appendix~\ref{app:LLM_judge}; while absolute scores vary across prompts and backbones, the relative ordering is broadly consistent, with Top-$W$ achieving the strongest average rubric score and remaining robust at higher $T$ (Table~\ref{tab:avg_summary_3models_4methods_2.8}).
We additionally evaluate a higher-$\beta$ variant (Table~\ref{tab:avg_summary_3models_4methods_1.5_1.5} in Appendix), where increasing $\beta$ further improves rubric quality: across all 27 $(\text{LLM},T,\text{prompt})$ triplets, Min-$p$, Top-$p$, and Top-$H$ win 6, 5, and 5 settings, respectively, while Top-$W$ wins 12.

\begin{table*}[!t]
\centering
\scriptsize
\setlength{\tabcolsep}{2.5pt}
\renewcommand{\arraystretch}{0.92}
\resizebox{\textwidth}{!}{%
\begin{tabular}{llcccccccccccc}
\toprule
& & \multicolumn{4}{c}{\textbf{Qwen}} & \multicolumn{4}{c}{\textbf{Llama}} & \multicolumn{4}{c}{\textbf{Phi}} \\
\cmidrule(lr){3-6}\cmidrule(lr){7-10}\cmidrule(lr){11-14}
\textbf{$T$} & \textbf{Prompt} & \textbf{Min-$p$} & \textbf{Top-$p$} & \textbf{Top-$H$} & \textbf{Top-$W$} & \textbf{Min-$p$} & \textbf{Top-$p$} & \textbf{Top-$H$} & \textbf{Top-$W$} & \textbf{Min-$p$} & \textbf{Top-$p$} & \textbf{Top-$H$} & \textbf{Top-$W$} \\
\midrule
1.0 & Prompt 1
& 6.64 $\pm$ 0.88 & 6.16 $\pm$ 1.95 & 6.45 $\pm$ 0.57 & \textbf{7.32} $\pm$ \textbf{0.65}
& 7.12 $\pm$ 0.92 & 7.12 $\pm$ 0.63 & \textbf{7.40} $\pm$ \textbf{0.63} & 7.08 $\pm$ 0.53
& 6.96 $\pm$ 0.59 & \textbf{7.88} $\pm$ \textbf{0.60} & 6.85 $\pm$ 0.30 & 6.88 $\pm$ 0.74 \\
1.0 & Prompt 2
& 6.64 $\pm$ 1.13 & 7.00 $\pm$ 0.62 & 6.52 $\pm$ 0.41 & \textbf{7.12} $\pm$ \textbf{0.90}
& 7.16 $\pm$ 0.15 & 7.20 $\pm$ 0.71 & \textbf{7.36} $\pm$ \textbf{0.34} & 6.95 $\pm$ 0.71
& 6.92 $\pm$ 0.39 & \textbf{7.72} $\pm$ \textbf{0.93} & 7.28 $\pm$ 1.03 & 6.12 $\pm$ 1.25 \\
1.0 & Prompt 3
& 6.64 $\pm$ 0.74 & \textbf{7.12} $\pm$ \textbf{0.56} & 7.10 $\pm$ 0.71 & 5.56 $\pm$ 0.87
& 6.95 $\pm$ 0.85 & \textbf{7.80} $\pm$ \textbf{0.51} & 7.44 $\pm$ 0.23 & 6.87 $\pm$ 0.34
& 7.28 $\pm$ 0.63 & \textbf{7.36} $\pm$ \textbf{0.72} & 7.12 $\pm$ 0.39 & 7.12 $\pm$ 0.79 \\
\midrule
1.5 & Prompt 1
& 5.96 $\pm$ 1.44 & 1.08 $\pm$ 0.10 & 5.00 $\pm$ 1.54 & \textbf{7.16} $\pm$ \textbf{0.75}
& \textbf{7.88} $\pm$ \textbf{0.39} & 3.84 $\pm$ 2.03 & 7.52 $\pm$ 0.65 & 7.68 $\pm$ 0.60
& \textbf{8.12} $\pm$ \textbf{0.41} & 5.20 $\pm$ 0.90 & 7.80 $\pm$ 0.47 & 6.44 $\pm$ 0.23 \\
1.5 & Prompt 2
& \textbf{7.60} $\pm$ \textbf{0.55} & 1.12 $\pm$ 0.10 & 6.56 $\pm$ 0.51 & 6.00 $\pm$ 1.76
& \textbf{7.80} $\pm$ \textbf{0.57} & 2.16 $\pm$ 0.62 & 7.60 $\pm$ 0.51 & 7.45 $\pm$ 0.36
& \textbf{7.76} $\pm$ \textbf{0.48} & 4.48 $\pm$ 1.11 & 7.60 $\pm$ 0.68 & 7.32 $\pm$ 0.82 \\
1.5 & Prompt 3
& 6.92 $\pm$ 0.93 & 1.20 $\pm$ 0.13 & \textbf{7.04} $\pm$ \textbf{0.75} & 6.88 $\pm$ 0.77
& 7.52 $\pm$ 0.52 & 1.64 $\pm$ 0.29 & \textbf{7.60} $\pm$ \textbf{0.52} & 6.68 $\pm$ 0.16
& 7.12 $\pm$ 0.59 & 3.56 $\pm$ 1.72 & 7.08 $\pm$ 0.60 & \textbf{7.80} $\pm$ \textbf{0.38} \\
\midrule
2.0 & Prompt 1
& 5.60 $\pm$ 2.27 & 2.00 $\pm$ 1.90 & 4.64 $\pm$ 1.10 & \textbf{7.16} $\pm$ \textbf{0.73}
& 6.92 $\pm$ 0.59 & 2.40 $\pm$ 0.94 & 6.20 $\pm$ 0.58 & \textbf{7.64} $\pm$ \textbf{0.65}
& 7.08 $\pm$ 0.41 & 2.20 $\pm$ 1.22 & \textbf{7.48} $\pm$ \textbf{0.57} & 7.08 $\pm$ 0.45 \\
2.0 & Prompt 2
& \textbf{6.72} $\pm$ \textbf{1.04} & 2.04 $\pm$ 1.98 & 4.40 $\pm$ 2.20 & 6.15 $\pm$ 0.65
& 5.75 $\pm$ 0.75 & 2.20 $\pm$ 1.72 & 4.40 $\pm$ 0.86 & \textbf{7.64} $\pm$ \textbf{0.57}
& \textbf{7.76} $\pm$ \textbf{0.86} & 2.08 $\pm$ 1.00 & 6.44 $\pm$ 0.46 & 7.12 $\pm$ 0.59 \\
2.0 & Prompt 3
& 5.92 $\pm$ 0.45 & 1.20 $\pm$ 0.25 & 3.68 $\pm$ 0.77 & \textbf{7.60} $\pm$ \textbf{0.64}
& 4.80 $\pm$ 0.20 & 1.72 $\pm$ 0.65 & 4.04 $\pm$ 1.04 & \textbf{7.72} $\pm$ \textbf{0.32}
& \textbf{7.92} $\pm$ \textbf{0.37} & 1.36 $\pm$ 0.23 & 6.64 $\pm$ 1.37 & 7.40 $\pm$ 0.46 \\
\bottomrule
\end{tabular}%
}
\caption{Average judge score (mean $\pm$ std over repeats) for each temperature $T$ and prompt. For each $(T,\mathrm{prompt})$ and model, the best (highest mean) method is bolded; ties are bolded for all maxima. For details on each of the, Diversity, Originality, Narrative Flow, Emotional Impact check the Appendix. Across all 27 triplets, Min-$p$, top-$p$, and Top-$H$ win 8, 5, and 5 cases respectively, 
while Top-$W$ wins 9. This experiment is conducted under $\beta=2.8$ with similar settings as other experiments.}
\label{tab:avg_summary_3models_4methods_2.8}
  \minipostspace
\end{table*}

\begin{figure*}[t]
  \centering
  \begin{subfigure}{0.3\linewidth}
    \centering
    \includegraphics[width=\linewidth]{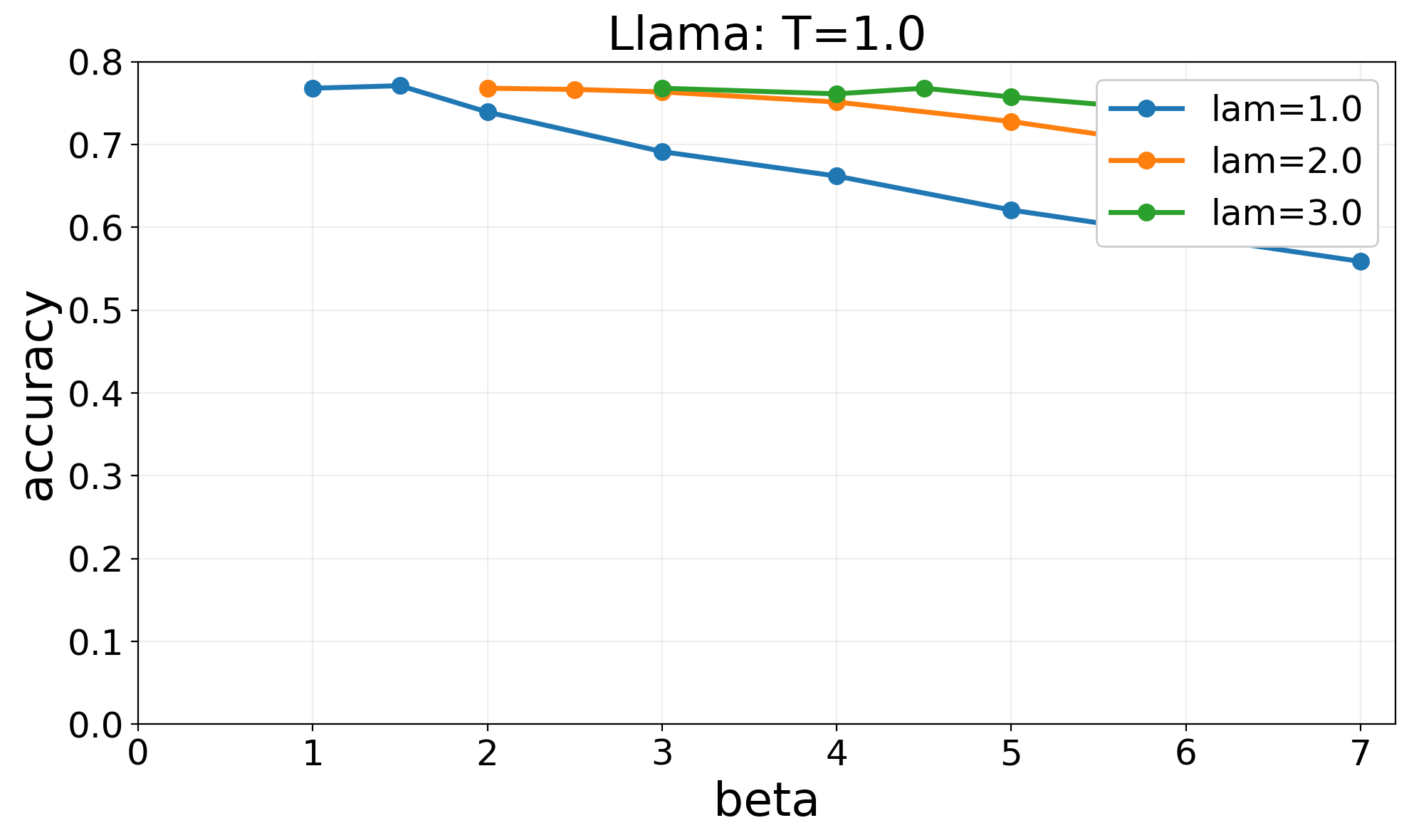}
    \caption{$T=1.0$}
    \label{fig:gpqa_llama_sweep_t1}
  \end{subfigure}\hfill
  \begin{subfigure}{0.3\linewidth}
    \centering
    \includegraphics[width=\linewidth]{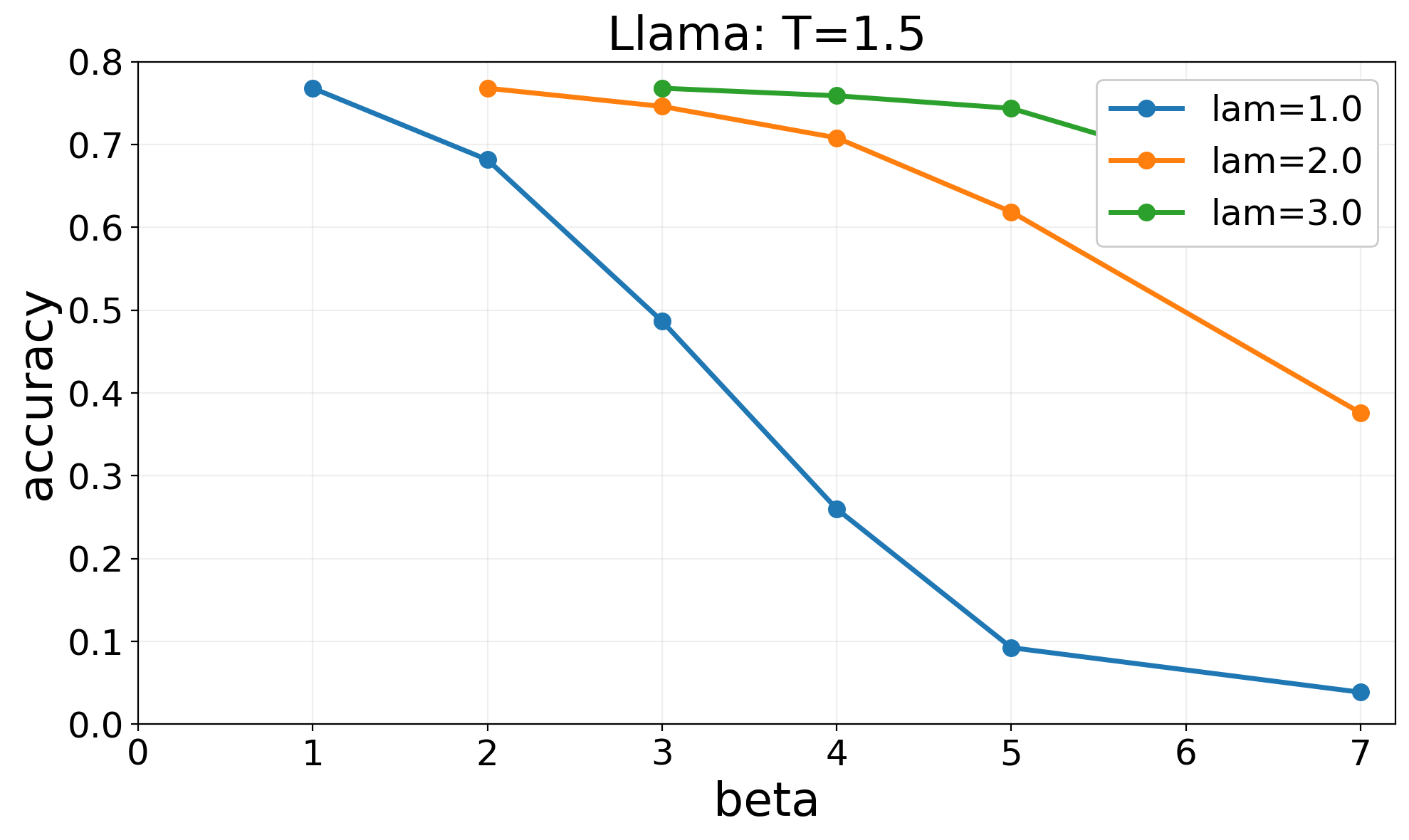}
    \caption{$T=1.5$}
    \label{fig:gpqa_llama_sweep_t15}
  \end{subfigure}\hfill
  \begin{subfigure}{0.3\linewidth}
    \centering
    \includegraphics[width=\linewidth]{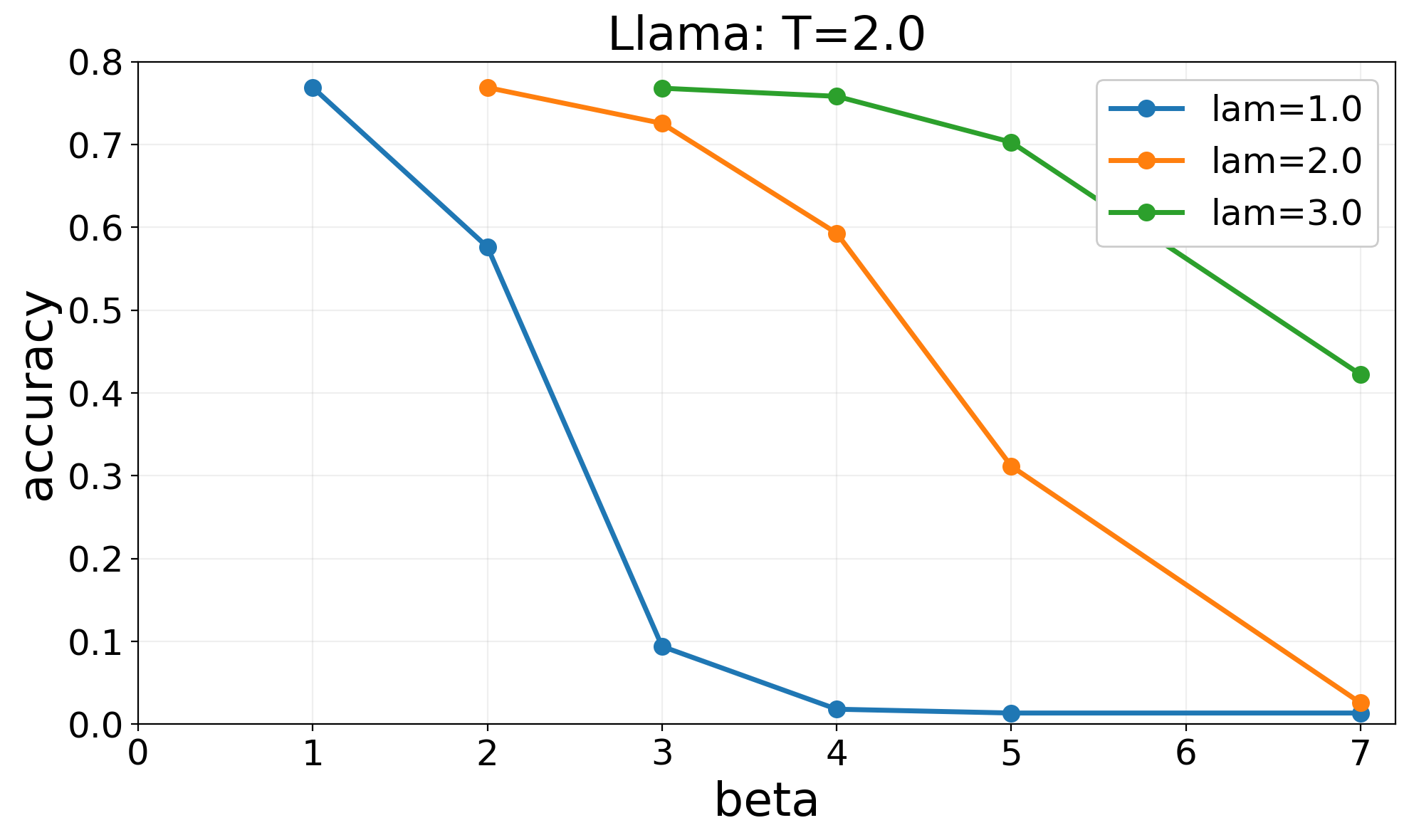}
    \caption{$T=2.0$}
    \label{fig:gpqa_llama_sweep_t2}
  \end{subfigure}

  \caption{GSM8K accuracy sensitivity of Top-$W$ to $\beta$ for fixed $\lambda$s and LLaMA3.1-8B-Instruct at $T\in\{1.0,1.5,2.0\}$.}
  \label{fig:gpqa_llama_beta_sweeps}
\end{figure*}

\subsection{Top-$W$ Ablations and Behavior}
\paragraph{Sensitivity to $\beta$ and the accuracy--creativity trade-off}
\label{experiment_sweep_}
Focusing on LLaMA-3.1-8B (because of its more divers behavior on Top-$W$) and GSM8K, we run a parameter sensitivity study over $\beta$ (and $\lambda$) to quantify how Top-$W$ transitions from conservative, accuracy-favoring behavior to more diverse generations (Figure \ref{fig:gpqa_llama_beta_sweeps}). In particular, large $\beta$ can increase diversity in creative settings (as reflected by rubric-based judging), while potentially degrading performance on strict-answer reasoning tasks. As summarized by the sweep plots and the rubric-based judge tables (Tables~ \ref{tab:avg_summary_3models_4methods_2.8} and \ref{tab:avg_summary_3models_4methods_1.5_1.5}), $\beta$ acts as a \emph{control knob} on the exploration--conservatism axis:
smaller $\beta$ tends to favor sharper truncation and higher accuracy on strict-answer tasks, while larger $\beta$ expands the viable support and can improve diversity- and style-sensitive judgments.

\paragraph{Runtime and time complexity}
Time complexity of the Top-$W$ logits processor is derived in Appendix~\ref{time_complexity_section}.
Empirically, we report ms/tok for Top-$W$, Top-$H$, Top-$p$, and Min-$p$, and compute relative overhead vs.\ Top-$p$ at the same temperature. Across models and temperatures, Top-$W$ incurs only modest overhead from the geometric refinement as shown in Table \ref{tab:ms_per_tok} in Appendix~\ref{time_complexity_section}; in our measurements it is approximately \textbf{5.4\% slower} on average than Top-$H$ / Top-$p$ / Min-$p$ under the same settings.

\section{Related Work}
\label{sec:related-work}

Existing decoding heuristics offer strong trade-offs between quality, diversity, and speed. However, most truncation rules select supports using only probabilities and entropy, ignoring token-space geometry.

\paragraph{Mass-based truncation and probability-only heuristics.}
Popular stochastic decoders crop by probability mass (top-$k$, nucleus/top-$p$) and renormalize before sampling \citep{fan2018hierarchical,holtzman2020}.
Follow-ups tune truncation with typicality or online controllers---e.g., locally typical sampling, $\eta$-sampling, Mirostat, and Min-$p$---but still rely primarily on ordered probabilities and/or entropy rather than an explicit token geometry \citep{meister2022typical,Hewitt2022Desmoothing,Basu2020Mirostat,Nguyen2024}.

\paragraph{Entropy-bounded decoding.}
Entropy-control methods treat randomness as a direct constraint/knob; most closely, \textbf{Top-$H$} selects a truncated support whose renormalized distribution satisfies a bounded-entropy criterion, yielding a coherence--creativity control \citep{potraghloo2025toph}.
Related entropy-based controllers (e.g., Mirostat and $\eta$-sampling) adapt truncation online via uncertainty/entropy proxies \citep{Basu2020Mirostat,Hewitt2022Desmoothing}.
Top-$W$ stays in this optimization-based truncation family, but adds a \emph{transport} penalty under a token metric, making entropy-aware cropping sensitive to token-space structure.

\paragraph{Geometry and optimal transport in NLP.}
Embedding-induced OT has long been used to compare word-type distributions (e.g., Word Mover's Distance) \citep{Kusner2015WMD}, and entropic OT provides scalable approximations \citep{cuturi2013sinkhorn}.
However, exact OT (and even heavy iterative approximations) is too expensive per decoding step at modern vocabulary sizes, motivating our restricted candidate pool and lightweight refinement.

\paragraph{Logit shaping and acceleration.}
A complementary line shapes logits using auxiliary signals or additional models (e.g., contrastive search, PPLM-style guidance, and GeDi-style discriminators) \citep{li2023contrastive,Dathathri2020PPLM,Krause2021GeDi}, whereas Top-$W$ remains a single-model truncation sampler.
Throughput-oriented acceleration such as speculative decoding is largely orthogonal and could, in principle, use Top-$W$ as the target distribution \citep{leviathan2023speculative}.

\section{Conclusion}
We proposed \textbf{Top-$W$}, a geometry-aware truncation rule that selects a cropped distribution by jointly trading off transport to the original next-token distribution (via $W_1$) and uncertainty (via an entropy term), yielding a principled generalization of widely-used truncation methods.
We derived an efficient per-step implementation suitable for a logits processor, and showed how Top-$W$ connects to standard decoding rules under appropriate specializations.

Empirically, across a broad suite of evaluation settings spanning both accuracy-focused QA and judge-based open-ended generation, Top-$W$ consistently outperforms strong truncation baselines, winning in the large majority of cases (often nearly all), with the largest gains appearing in accuracy-oriented tasks and robust improvements also observed under LLM-judge protocols.
Overall, these results suggest that incorporating embedding geometry into truncation decisions can reliably improve generation quality across diverse settings and evaluation regimes. 

\section*{Impact Statement}

This work proposes an inference-time decoding method for large language models (LLMs) that leverages token-embedding geometry to improve generation quality. If adopted, it may yield positive societal benefits by enabling more coherent and diverse model outputs, potentially improving applications such as education, accessibility tools, programming assistance, and creative writing.

As with most advances that improve the fluency or controllability of text generation, our method could also increase the effectiveness of harmful uses of LLMs (e.g., misinformation, spam, harassment, or assisting other unsafe activities) by making outputs easier to produce or more persuasive. The method does not introduce new training data, does not modify model weights, and does not directly address (nor inherently worsen) underlying issues such as bias, toxicity, privacy leakage, or factual hallucination; however, changes in decoding can shift output distributions and therefore may affect the prevalence of such behaviors in practice. We recommend deploying geometry-aware decoding only in conjunction with established safety measures (e.g., content filtering, policy-tuned models, monitoring, and rate limits) and evaluating impacts on refusals, toxicity, bias, and hallucination rates in the target setting.

Overall, we view this contribution as a technical advance in large languge models. The broader impacts are largely those already associated with LLM deployment, with the main additional consideration being that improved decoding efficiency and output quality can amplify both beneficial and malicious downstream uses.

\bibliographystyle{plainnat} 
\bibliography{references}

\appendix

\section{Details of the embedding-induced metric}
\label{app:metric-details}

\paragraph{Normalization and diagonal whitening.}
Let $\hat e_i\in\R^m$ denote the input embedding for token $i\in V=\{1,\dots,n\}$ and define
$\tilde e_i\coloneqq \hat e_i/\|\hat e_i\|_2$.
Compute the mean of normalized embeddings
\[
\mu\coloneqq \frac1n\sum_{i=1}^n \tilde e_i \in \R^m,
\]
and the per-coordinate variance
\[
\Phi_\ell\coloneqq \frac1n\sum_{i=1}^n\bigl(\tilde e_{i\ell}-\mu_\ell\bigr)^2,
\qquad \ell\in\{1,\dots,m\}.
\]
For a small $\varepsilon>0$, define the diagonal whitening matrix
\[
D \coloneqq \operatorname{diag}(s_1,\dots,s_m),
\qquad
s_\ell \coloneqq (\Phi_\ell+\varepsilon)^{-1/2}.
\]
The whitened embeddings are $\tilde e_i^{\mathrm{white}}\coloneqq D(\tilde e_i-\mu)$ and the resulting ground metric is
\begin{align}
d(i,j)
\coloneqq
\|\tilde e_i^{\mathrm{white}}-\tilde e_j^{\mathrm{white}}\|_2
=
\sqrt{(\tilde e_i-\tilde e_j)^\top D^2(\tilde e_i-\tilde e_j)}.
\label{eq:maha-metric-app}
\end{align}

\section{Proof of Lemma~\ref{lem:factorization}}
\label{app:factorization-proof}
\begin{proof}
Fix any $f\in\mathcal{F}$. Using~\eqref{eq:def-qS},
\begin{align}
\EE_p[f]-\EE_{q_S}[f]
&=
\sum_{i} p_i f_i
-
\sum_{i\in S}\frac{p_i}{\Gamma_S}f_i
\notag\\
&=
\sum_{i\in S^c} p_i f_i
+
\sum_{i\in S} p_i f_i
-
\frac{1}{\Gamma_S}\sum_{i\in S} p_i f_i
\notag\\
&=
\sum_{i\in S^c} p_i f_i
-
\Bigl(\frac{1-\Gamma_S}{\Gamma_S}\Bigr)
\sum_{i\in S} p_i f_i
\notag\\
&=
(1-\Gamma_S)
\Bigl(
\EE_{p(\cdot\mid S^c)}[f]
-
\EE_{p(\cdot\mid S)}[f]
\Bigr).
\end{align}
Taking the supremum over $f\in\mathcal{F}$ on both sides and applying the KR dual
\eqref{eq:KR-dual} to the pair $\bigl(p(\cdot\mid S^c),\,p(\cdot\mid S)\bigr)$ yields
\eqref{eq:factorization}.
\end{proof}


\subsection{Fixed-$f$ surrogate and proof of Lemma~\ref{lem:fixedf-sstep}}
\label{app:fixedf-derivation}

We start from the KR dual
\begin{align}
\Wone(p,q_S)
&=
\sup_{f\in\mathcal{F}}
\Bigl(\EE_p[f]-\EE_{q_S}[f]\Bigr),
\label{eq:app-KR}
\end{align}
and fix any feasible $f\in\mathcal{F}$, yielding the lower bound
\begin{align}
\Wone(p,q_S)
&\ge
\EE_p[f]-\EE_{q_S}[f].
\label{eq:app-KR-lb}
\end{align}
Recall the overall objective (definition in the main paper)
\begin{align}
F_{\lambda,\beta}(S)
&\coloneqq
\Wone(p,q_S)+\lambda\,\Hsh(q_S)-\beta\log\Gamma_S.
\label{eq:app-Fdef}
\end{align}
Combining~\eqref{eq:app-KR-lb} and~\eqref{eq:app-Fdef} gives, for any feasible $f$,
\begin{align}
F_{\lambda,\beta}(S)
&\ge
\Bigl(\EE_p[f]-\EE_{q_S}[f]\Bigr)
+\lambda\,\Hsh(q_S)-\beta\log\Gamma_S
\nonumber\\
&\;=\;
\mathcal{L}_f(S),
\label{eq:app-Lf-def}
\end{align}
where $\mathcal{L}_f(S)$ is the fixed-$f$ surrogate.

\paragraph{Expanding $\EE_{q_S}[f]$ and $\Hsh(q_S)$.}
By definition~\eqref{eq:def-qS},
\begin{align}
q_S(i)
&=
\frac{p_i}{\Gamma_S}\,\mathbf{1}\{i\in S\},
\qquad
\Gamma_S \coloneqq \sum_{i\in S}p_i.
\label{eq:app-qS}
\end{align}
Hence,
\begin{align}
\EE_{q_S}[f]
&=
\sum_i q_S(i) f_i
=
\frac{1}{\Gamma_S}\sum_{i\in S} p_i f_i.
\label{eq:app-EqS}
\end{align}
For the entropy,
\begin{align}
\Hsh(q_S)
&=
-\sum_i q_S(i)\log q_S(i)
=
-\sum_{i\in S}\frac{p_i}{\Gamma_S}\log\!\Bigl(\frac{p_i}{\Gamma_S}\Bigr)
\nonumber\\
&=
-\frac{1}{\Gamma_S}\sum_{i\in S}p_i\log p_i
+\frac{1}{\Gamma_S}\sum_{i\in S}p_i\log\Gamma_S
\nonumber\\
&=
-\frac{1}{\Gamma_S}\sum_{i\in S}p_i\log p_i
+\log\Gamma_S.
\label{eq:app-HqS}
\end{align}

\paragraph{Closed form for $\mathcal{L}_f(S)$.}
Substituting~\eqref{eq:app-EqS} and~\eqref{eq:app-HqS} into~\eqref{eq:app-Lf-def} yields
\begin{align}
\mathcal{L}_f(S)
&=
\sum_i p_i f_i
-\frac{1}{\Gamma_S}\sum_{i\in S}p_i f_i
-\frac{\lambda}{\Gamma_S}\sum_{i\in S}p_i\log p_i
\nonumber\\
&+(\lambda-\beta)\log\Gamma_S
\nonumber\\
&=
\sum_i p_i f_i
+(\lambda-\beta)\log\Gamma_S
\nonumber\\
&-\frac{1}{\Gamma_S}\sum_{i\in S}p_i\bigl(f_i+\lambda\log p_i\bigr).
\label{eq:app-fixedf}
\end{align}

\paragraph{Rewriting as a normalized score maximization.}
Using the definition of the combined score:
\begin{align}
\phi_i(f)
&\coloneqq f_i+\lambda\log p_i.
\end{align}
Then~\eqref{eq:app-fixedf} becomes
\begin{align}
\mathcal{L}_f(S)
&=
\underbrace{\sum_i p_i f_i}_{\eqqcolon\,C_f}
+(\lambda-\beta)\log\Gamma_S
-\frac{1}{\Gamma_S}\sum_{i\in S}p_i\,\phi_i(f)
\nonumber\\
&=
C_f
-\left[
\frac{1}{\Gamma_S}\sum_{i\in S}p_i\,\phi_i(f)
+(\beta-\lambda)\log\Gamma_S
\right]
\nonumber\\
&=
C_f - G_f(S),
\label{eq:app-CminusG}
\end{align}
where $C_f$ is independent of $S$ and $G_f$ is exactly~\eqref{eq:Gf-def}. Since
$F_{\lambda,\beta}(S)\ge \mathcal{L}_f(S)=C_f-G_f(S)$, this proves the lower bound
statement~\eqref{eq:fixedf-lb}. Finally, because $C_f$ does not depend on $S$,
\begin{align}
\Argmin_{S\subseteq V}\ \bigl(C_f-G_f(S)\bigr)
&=
\Argmax_{S\subseteq V}\ G_f(S),
\end{align}
which is~\eqref{eq:Sstep-max}. \qed

\section{Full proof of Theorem~\ref{thm:prefix-and-singleton}}
\label{app:prefix-proof}
\begin{proof}
Write $c\coloneqq \beta-\lambda$. Assume $p_i>0$ and $S\neq\emptyset$, so $\Gamma(S)>0$.

\medskip
\noindent\textbf{Part (a): prefix optimality for $c\ge 0$.}
Fix any nonempty subset $S$ and let $m\coloneqq \Gamma(S)$.
Choose $k\in[n]$ such that
\begin{align}
\Gamma_{k-1}< m \le \Gamma_k,
\label{eq:choose-k}
\end{align}
where $\Gamma_k = \sum_{r=1}^{k}p_{(r)}$, and the convention $\Gamma_0\coloneqq 0$ and $\Phi_0\coloneqq 0$.
Let $t\coloneqq \phi_{(k)}$.

\smallskip
\noindent\emph{Step 1: a discrete upper bound on $\Phi(S)$.}
Decompose
\begin{align}
\Phi(S)
&=\sum_{i\in S}p_i\phi_i
=t\sum_{i\in S}p_i+\sum_{i\in S}p_i(\phi_i-t)
\nonumber\\
&=tm+\sum_{i\in S}p_i(\phi_i-t).
\label{eq:V-decomp}
\end{align}
Since $\phi_{(1)}\ge\cdots\ge\phi_{(n)}$,
we have $\phi_i-t\le 0$ for ranks $\ge k$ and $\phi_i-t\ge 0$ for ranks $\le k-1$.
Hence
\begin{align}
\sum_{i\in S}p_i(\phi_i-t)
&\le
\sum_{i\in S\cap\{(1),\dots,(k-1)\}}p_i(\phi_i-t)
\nonumber\\
&\le
\sum_{r=1}^{k-1}p_{(r)}(\phi_{(r)}-t)
=
\Phi_{k-1}-t\Gamma_{k-1}.
\label{eq:pos-part-bound}
\end{align}
Define the nonnegative constant
\begin{align}
C_k
&\coloneqq
\Phi_{k-1}-t\Gamma_{k-1}
=
\sum_{r=1}^{k-1}p_{(r)}\bigl(\phi_{(r)}-\phi_{(k)}\bigr)
\ge 0.
\label{eq:Ck-def}
\end{align}
Combining \eqref{eq:V-decomp} and \eqref{eq:pos-part-bound} gives
\begin{align}
\Phi(S)\le tm+C_k.
\label{eq:V-upper}
\end{align}
Dividing by $m$ and adding $c\log m$ yields
\begin{align}
G_f(S)
&=\frac{\Phi(S)}{m}+c\log m
\le
t+\frac{C_k}{m}+c\log m
\eqqcolon
F_k(m).
\label{eq:G-upper-by-F}
\end{align}

\smallskip
\noindent\emph{Step 2: $F_k$ maximizes on $[\Gamma_{k-1},\Gamma_k]$ at an endpoint when $c\ge 0$.}
For $x\in[\Gamma_{k-1},\Gamma_k]$, let $F_k(x)=t+C_k/x+c\log x$.
Then
\begin{align}
F_k'(x)
&=
-\frac{C_k}{x^2}+\frac{c}{x}
=
\frac{-C_k+cx}{x^2}.
\label{eq:F-deriv}
\end{align}
If $c\ge 0$, the numerator $-C_k+cx$ is nondecreasing in $x$, so $F_k'(x)$ crosses
zero at most once, and if it crosses then it goes from negative to positive.
Thus $F_k$ has no strict interior maximum on $[\Gamma_{k-1},\Gamma_k]$, so
\begin{align}
F_k(m)\le \max\{F_k(\Gamma_{k-1}),F_k(\Gamma_k)\}.
\label{eq:endpoints}
\end{align}
Compute the endpoints using $C_k=\Phi_{k-1}-t\Gamma_{k-1}$:
\begin{align}
F_k(\Gamma_{k-1})
&=
t+\frac{\Phi_{k-1}-t\Gamma_{k-1}}{\Gamma_{k-1}}+c\log(\Gamma_{k-1})
\nonumber\\
&=
\frac{\Phi_{k-1}}{\Gamma_{k-1}}+c\log(\Gamma_{k-1}),
\label{eq:F-left}
\\
F_k(\Gamma_k)
&=
t+\frac{\Phi_{k-1}-t\Gamma_{k-1}}{\Gamma_k}+c\log(\Gamma_k)
\nonumber\\
&=
\frac{t\Gamma_k+\Phi_{k-1}-t\Gamma_{k-1}}{\Gamma_k}+c\log(\Gamma_k)
\nonumber\\
&=
\frac{\Phi_k}{\Gamma_k}+c\log(\Gamma_k).
\label{eq:F-right}
\end{align}
Combining \eqref{eq:G-upper-by-F}--\eqref{eq:F-right} yields
\begin{align}
G_f(S)
\le
\max_{j\in[n]}
\Bigl\{
\frac{\Phi_j}{\Gamma_j}+c\log(\Gamma_j)
\Bigr\}.
\label{eq:global-upper}
\end{align}
Since each prefix $S_j$ is feasible, equality holds, and any maximizing
$k^\star$ gives an optimal prefix $S_{k^\star}$. This proves (a).

\medskip
\noindent\textbf{Part (b): singleton collapse for $c\le 0$.}
Let $S\neq\emptyset$ and write $m=\Gamma(S)$.
Because $\Phi(S)/m$ is a weighted average of $\{\phi_i:i\in S\}$,
\begin{align}
\frac{\Phi(S)}{m}\le \max_{i\in S}\phi_i.
\label{eq:avg-bound}
\end{align}
Also, for any $i\in S$ we have $m\ge p_i$, hence $\log m\ge \log p_i$.
If $c\le 0$, multiplying reverses the inequality:
\begin{align}
c\log m \le c\log(p_i),
\qquad \forall i\in S.
\label{eq:log-bound}
\end{align}
Therefore
\begin{align}
G_f(S)
&=\frac{\Phi(S)}{m}+c\log m
\le
\max_{i\in S}\phi_i+c\log m
\nonumber\\
&\le
\max_{i\in S}\bigl\{\phi_i+c\log(p_i)\bigr\}
\le
\max_{i\in[n]}\bigl\{\phi_i+c\log(p_i)\bigr\}.
\label{eq:singleton-upper}
\end{align}
For a singleton $S=\{i\}$ we have $G_f(\{i\})=\phi_i+c\log(p_i)$, so equality holds
at any maximizing $i^\star$. This proves (b).

\paragraph{Remark.}
Since $\phi_i=f_i+\lambda\log p_i$ and $c=\beta-\lambda$, the singleton score can be
written equivalently as $f_i+\beta\log p_i$; hence in the regime $\beta\le\lambda$
the exact fixed-$f$ $S$-update is independent of $\lambda$.
\end{proof}

\section{Proof of Monotonicity of the optimal prefix mass in $\beta$ for fixed $f$}
\label{app_m}
\begin{proof}
Write $J_k(c)$ as an affine function of $c$:
\begin{align}
J_k(c)=A_k+c\,B_k,
\qquad
A_k\coloneqq \frac{\Phi_k}{\Gamma_k},
\quad
B_k\coloneqq \log(\Gamma_k).
\label{eq:beta-monotone-affine}
\end{align}
Fix $c_1<c_2$ and choose any $k_1\in K(c_1)$ and $k_2\in K(c_2)$.
Optimality of $k_1$ at $c_1$ gives
\begin{align}
A_{k_1}+c_1 B_{k_1}\ge A_{k_2}+c_1 B_{k_2},
\label{eq:beta-monotone-ineq1}
\end{align}
and optimality of $k_2$ at $c_2$ gives
\begin{align}
A_{k_2}+c_2 B_{k_2}\ge A_{k_1}+c_2 B_{k_1}.
\label{eq:beta-monotone-ineq2}
\end{align}
Adding \eqref{eq:beta-monotone-ineq1} and \eqref{eq:beta-monotone-ineq2} cancels
the $A$-terms and yields
\begin{align}
(c_2-c_1)\,(B_{k_2}-B_{k_1})\ge 0.
\end{align}
Since $c_2-c_1>0$, we conclude $B_{k_2}\ge B_{k_1}$, i.e.,
$\log(\Gamma_{k_2})\ge \log(\Gamma_{k_1})$, which is equivalent to
$\Gamma_{k_2}\ge \Gamma_{k_1}$ because $\log$ is increasing on $(0,1]$.
Finally, if $p_i>0$ for all $i$, then $\Gamma_k$ is strictly increasing in $k$, so
$\Gamma_{k_2}\ge \Gamma_{k_1}$ implies $k_2\ge k_1$.
\end{proof}

\section{On Exact KR Solution}
\label{app:KR_why}
For a fixed crop $S$, the exact KR step solves the discrete dual linear program
$\max_{f\in\R^n} \sum_{i=1}^n (p_i-q_S(i)) f_i$
s.t. $|f_i-f_j|\le d(i,j)\ \ \forall i,j\in V$,
which has $\Theta(n^2)$ Lipschitz constraints.
Solving this discrete dual linear program at each decoding step (and potentially multiple times per token)
is infeasible for modern vocabularies ($n\sim 10^5$).
While entropic OT or graph-metric reductions can approximate dual potentials,
they still require substantial iterative work and repeated distance access,
making them impractical in a logits processor.

\section{Shift invariance of the fixed-$f$ subset update}
\label{app_maximizer}
\begin{theorem}[Shift invariance of the fixed-$f$ $S$-step]
\label{thm:shift-invariance}
Fix $\lambda,\beta\ge 0$ and $p\in\Delta^{n-1}$. For any $f\in\R^n$ and any constant
$a\in\R$, define $f'\coloneqq f+a\ones$. With
\begin{align}
\phi_i(f)\coloneqq f_i+\lambda\log p_i,\qquad
\Gamma_S\coloneqq\sum_{i\in S}p_i,
\end{align}
and
\begin{align}
G_f(S)\coloneqq \frac{1}{\Gamma_S}\sum_{i\in S}p_i\,\phi_i(f)+(\beta-\lambda)\log\Gamma_S,
\qquad S\neq\emptyset,
\end{align}
we have for every nonempty $S$,
\begin{align}
G_{f'}(S)=G_f(S)+a,
\end{align}
hence
\begin{align}
\Argmax_{S\neq\emptyset}G_{f'}(S)=\Argmax_{S\neq\emptyset}G_f(S).
\end{align}
In particular, the $\phi$-ordering and the optimal prefix/singleton returned by the
exact scan are unchanged.
\end{theorem}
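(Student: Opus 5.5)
The plan is a direct computation: show that the shift adds the same constant $a$ to $G$ for every nonempty $S$. Since $\log p_i$ does not change when $f$ changes, the shifted combined score is
\[
\phi_i(f')=f_i+a+\lambda\log p_i=\phi_i(f)+a
\]
for every $i$.

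Substituting this into the normalized average gives
\[
\frac{1}{\Gamma_S}\sum_{i\in S}p_i\,\phi_i(f')
=\frac{1}{\Gamma_S}\sum_{i\in S}p_i\,\phi_i(f)+\frac{a}{\Gamma_S}\sum_{i\in S}p_i .
\]
The second term equals exactly $a$, because $\sum_{i\in S}p_i=\Gamma_S$ by definition. The mass term $(\beta-\lambda)\log\Gamma_S$ does not involve $f$, so it is the same under $f$ and $f'$. Adding the two pieces yields $G_{f'}(S)=G_f(S)+a$ for every nonempty $S$.

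Adding the same constant to every value of an objective cannot change which arguments attain the maximum over a fixed feasible family, so the argmax sets coincide. For the final claim about the exact scan, note that the map $\phi_i\mapsto\phi_i+a$ preserves the order of the scores, including ties. Any sorting permutation for $\phi(f)$ is therefore also a valid sorting permutation for $\phi(f')$, and the prefixes $S_k$ are the same sets. Each prefix value $J_k$ shifts by exactly $a$, by the identity above applied to $S=S_k$, so the maximizing $k^\star$ is unchanged. The singleton scores $\phi_i+c\log p_i$ also all shift by $a$, so the maximizing $i^\star$ is unchanged as well. Hence both regimes of Theorem~\ref{thm:prefix-and-singleton} return the same set.

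The only point needing care is the nonemptiness of $S$: it guarantees $\Gamma_S>0$ (under the standing assumption $p_i>0$), which is what makes the division by $\Gamma_S$ and the cancellation legitimate. Apart from that there is no real obstacle; the argument is purely algebraic and makes no use of the Lipschitz feasibility of $f$, so it holds for any $f\in\R^n$, as the statement claims.
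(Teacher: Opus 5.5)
Your proof is correct and follows the paper's argument essentially verbatim: you show $\phi_i(f')=\phi_i(f)+a$, use $\sum_{i\in S}p_i=\Gamma_S$ to get $G_{f'}(S)=G_f(S)+a$, and conclude that the maximizers and the $\phi$-ordering are unchanged. Your extra remarks make explicit what the paper leaves implicit: each prefix value $J_k$ and each singleton score shifts by $a$, and $\Gamma_S>0$ needs positive $p_i$ on $S$, which $p\in\Delta^{n-1}$ alone does not guarantee.
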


\begin{proof}
For all $i$, $\phi_i(f')=f_i+a+\lambda\log p_i=\phi_i(f)+a$. Thus for any nonempty $S$,
\begin{align}
\frac{1}{\Gamma_S}\sum_{i\in S}p_i\,\phi_i(f')
&=
\frac{1}{\Gamma_S}\sum_{i\in S}p_i\bigl(\phi_i(f)+a\bigr)
\nonumber\\
&=
\frac{1}{\Gamma_S}\sum_{i\in S}p_i\,\phi_i(f)+a,
\end{align}
since $\sum_{i\in S}p_i=\Gamma_S$. The term $(\beta-\lambda)\log\Gamma_S$ is unchanged,
so $G_{f'}(S)=G_f(S)+a$. Adding a constant to all feasible objective values does not
change the maximizers, and $\phi$-sorting is unchanged as well.
\end{proof}

\section{Proof of Lemma \ref{lem:extremal-anchored}}
\label{proof_of_Lemma_extermal}
\begin{proof}
Fix $i\in V$ and any $j\in S$. Since $f\in\mathcal F$ and $f(j)=0$,
\(
|f(i)|=|f(i)-f(j)|\le d(i,j).
\)
Minimizing over $j\in S$ yields \eqref{eq:envelope}.
To show feasibility of $\dist(\cdot,S)$, fix $i,j\in V$ and let $s^\star\in S$ attain
$\dist(i,S)=d(i,s^\star)$. By the triangle inequality,
\(
\dist(i,S)=d(i,s^\star)\le d(i,j)+d(j,s^\star)\le d(i,j)+\dist(j,S),
\)
so $\dist(i,S)-\dist(j,S)\le d(i,j)$; swapping $i,j$ gives the reverse inequality,
hence $|\dist(i,S)-\dist(j,S)|\le d(i,j)$ and $\dist(\cdot,S)\in\mathcal F$.
Finally, $-\dist(\cdot,S)\in\mathcal F$ since negation preserves the Lipschitz constant.
\end{proof}

\section{Uniform Metric and Reductions to Top-$H$, Top-$k$, Top-$p$, and Min-$p$}
\label{app:uniform-reductions}
This appendix records probability-only \emph{reductions}: under specific uniform geometry, several
standard truncation rules (Top-$k$, Top-$H$, and related heuristics) arise as special cases or
natural relaxations of our objective.
Concretely, replace the token metric by the uniform $(0$--$1)$ metric
\(
d_u(i,j)=\one\{i\neq j\}.
\)
Under $d_u$, moving any amount of probability mass costs the same per unit regardless of
token identity; thus transport is determined entirely by how much mass is discarded.

\begin{lemma}[Uniform-metric closed form]
\label{lem:uniform-closed-app}
Under $d_u$,
\begin{align}
\Wone(p,q_S)=1-\Gamma_S.
\label{eq:W1-uniform-app}
\end{align}
\end{lemma}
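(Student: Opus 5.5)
We prove $\Wone(p,q_S)=1-\Gamma_S$ by matching an upper bound from an explicit transport plan with a lower bound from the KR dual~\eqref{eq:KR-dual} applied to one well-chosen potential. We may assume $\Gamma_S\in(0,1)$. If $\Gamma_S=1$, then $q_S=p$ and both sides vanish; $\Gamma_S=0$ is excluded because $S$ must carry positive mass for $q_S$ to be defined. Under $d_u$, the Wasserstein distance is the total variation distance, so the natural first computation is
\[
\tfrac12\sum_i|p_i-q_S(i)| .
\]
On $S$ we have $q_S(i)-p_i=p_i\bigl(\tfrac{1}{\Gamma_S}-1\bigr)\ge 0$. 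On $S^c$ we have $p_i-q_S(i)=p_i\ge 0$. Each side therefore contributes exactly $1-\Gamma_S$, so the half-sum equals $1-\Gamma_S$.

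\emph{Upper bound.} Construct a coupling $\pi$ of $p$ and $q_S$ as follows.
\begin{itemize}
\item For $i\in S$, set $\pi(i,i)=p_i$, so the mass already sitting on $S$ stays in place at zero cost.
\item For $i\in S^c$, send mass $p_i$ from $i$ to the tokens $j\in S$ in proportion to $p_j/\Gamma_S$, i.e.\ set $\pi(i,j)=p_i\,p_j/\Gamma_S$.
\end{itemize}
Check the marginals. The row sums are $p_i$. For $j\in S$, the column sum is
\[
p_j+(1-\Gamma_S)\,p_j/\Gamma_S=p_j/\Gamma_S=q_S(j).
\]
Every moved unit of mass travels between distinct tokens, so it costs $1$ under $d_u$. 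The total cost is therefore $\sum_{i\in S^c}p_i=1-\Gamma_S$.

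\emph{Lower bound.} Take the potential $f=\one\{\cdot\in S^c\}$. It is $1$-Lipschitz for $d_u$, because $|f_i-f_j|\le 1=d_u(i,j)$ whenever $i\neq j$. Its dual value is
\[
\EE_p[f]-\EE_{q_S}[f]=(1-\Gamma_S)-0=1-\Gamma_S .
\]
Hence $\Wone(p,q_S)\ge 1-\Gamma_S$, and together with the upper bound this gives equality.

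An alternative route is through Lemma~\ref{lem:factorization}: it reduces the claim to showing $\Wone\bigl(p(\cdot\mid S^c),p(\cdot\mid S)\bigr)=1$. That holds because the two conditionals have disjoint supports, and the same indicator potential certifies the lower bound of $1$. The only real subtlety in either route is confirming that $d_u$ is a genuine metric, which it is.
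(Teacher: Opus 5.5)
Your proof is correct and complete. The explicit coupling gives $\Wone(p,q_S)\le 1-\Gamma_S$, and the potential $\one\{\cdot\in S^c\}$ gives the matching lower bound. Since the paper works with $\Wone$ through the KR dual \eqref{eq:KR-dual}, the only link you need between the two bounds is weak duality: $\EE_p[f]-\EE_{q_S}[f]=\sum_{i,j}\pi(i,j)(f_i-f_j)\le\sum_{i,j}\pi(i,j)\,d_u(i,j)$ for every coupling $\pi$ and every $f\in\mathcal F$. So neither strong duality nor the metric axioms for $d_u$ are actually used.

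The paper gives no formal proof of this lemma. It only remarks that under $d_u$ every unit of moved mass costs the same, so transport is determined by the discarded mass, and your argument is the rigorous version of that remark. Your separate treatment of $\Gamma_S=1$ is needed only for the factorization route, since Lemma~\ref{lem:factorization} assumes $\Gamma_S\in(0,1)$; the coupling and the indicator potential work there unchanged.
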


Substituting~\eqref{eq:W1-uniform-app} into~\eqref{eq:main-objective} and using~\eqref{eq:HqS} gives
\begin{align}
F_{\lambda,\beta}(S)
&=
1-\Gamma_S
+(\lambda-\beta)\log\Gamma_S
-\frac{\lambda}{\Gamma_S}\sum_{i\in S}p_i\log p_i.
\label{eq:F-uniform-main-app}
\end{align}
Equation~\eqref{eq:F-uniform-main-app} makes the mechanism explicit: \emph{geometry disappears}, and
$S$ influences the objective only through (i) its retained mass $\Gamma_S$ and (ii) the
probability profile $\{p_i\}_{i\in S}$ (via the cropped entropy term).

\subsection{Reduction to Top-$k$}
\label{subsec:uniform-topk-app}
Top-$k$ is recovered by imposing a pure cardinality budget and optimizing retained mass.
Impose $|S|\le k$ and set $\lambda=\beta=0$. Then~\eqref{eq:F-uniform-main-app} reduces to
\begin{align}
F_{0,0}(S)=1-\Gamma_S.
\end{align}
Minimizing $F_{0,0}$ is equivalent to maximizing $\Gamma_S$ subject to $|S|\le k$, which is
achieved by selecting the $k$ largest probabilities (Top-$k$).

\subsection{Reduction to Top-$H$ (entropy-constrained mass maximization)}
\label{subsec:uniform-toph-app}
Top-$H$~\cite{potraghloo2025toph} studies entropy-constrained mass maximization:
\begin{align}
\max_{S\subseteq V}\ \Gamma_S
\quad\text{s.t.}\quad
\Hsh(q_S)\le \alpha\,\Hsh(p),
\label{eq:ECMM-main-app}
\end{align}
for $\alpha\in(0,1]$.
Under $d_u$, setting $\beta=0$ in~\eqref{eq:main-objective} yields
\begin{align}
F_{\lambda,0}(S)=\bigl(1-\Gamma_S\bigr)+\lambda\,\Hsh(q_S),
\end{align}
which is the Lagrangian relaxation of~\eqref{eq:ECMM-main-app} (up to the additive constant $1$).
Thus, in the uniform-metric regime, our objective matches the Top-$H$ tradeoff between
\emph{keeping mass} (via $1-\Gamma_S$) and \emph{controlling cropped entropy} (via $\Hsh(q_S)$).

\section{Candidate-pool exactness theorem (justifying Top-M restriction)}
\label{app:pool-exact}
\begin{theorem}[Candidate-pool exactness for the fixed-$f$ $S$-step]
\label{thm:pool-exact}
Fix $f\in\mathcal F$ and let $c\coloneqq \beta-\lambda\ge 0$.
Define scores $\phi_i\coloneqq f_i+\lambda\log p_i$ and fix any deterministic
tie-breaking so that $\phi_{(1)}\ge\phi_{(2)}\ge\cdots\ge\phi_{(n)}$ is a total order.
For $k\in[n]$ define the full-vocabulary prefixes and their statistics
\begin{align}
S_k&\coloneqq\{(1),\dots,(k)\},\\
\Gamma_k&\coloneqq\sum_{r=1}^k p_{(r)},\\
\Phi_k&\coloneqq\sum_{r=1}^k p_{(r)}\,\phi_{(r)},\\
J_k&\coloneqq \frac{\Phi_k}{\Gamma_k}+c\log \Gamma_k.
\end{align}
Let
\begin{align}
k^\star \ \coloneqq\ \min\Argmax_{k\in[n]} J_k,
\qquad
S^\star\coloneqq S_{k^\star}.
\end{align}

Let $C\subseteq[n]$ be any candidate pool such that there exists an integer $L$
with $L\ge k^\star$ and
\begin{align}
\{(1),\dots,(L)\}\subseteq C.
\label{eq:pool-contains-topL}
\end{align}
Sort the elements of $C$ by decreasing $\phi$ using the \emph{same} tie-breaking,
and let $S^C_k$ be the top-$k$ prefix within $C$ (for $k\le |C|$). Define
\begin{align}
\Gamma^C_k&\coloneqq\sum_{i\in S^C_k}p_i,\\
\Phi^C_k&\coloneqq\sum_{i\in S^C_k}p_i\,\phi_i,\\
J^C_k&\coloneqq \frac{\Phi^C_k}{\Gamma^C_k}+c\log \Gamma^C_k .
\end{align}
Let $k_C^\star\coloneqq \min\Argmax_{k\in[|C|]} J^C_k$ and $S_C^\star\coloneqq S^C_{k_C^\star}$.

Then $k_C^\star=k^\star$ and $S_C^\star=S^\star$.
\end{theorem}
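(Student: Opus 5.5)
The plan is to show that the two scans agree on every prefix index up to $L$, and that no longer prefix inside $C$ can beat the global optimum. The second fact comes directly from Theorem~\ref{thm:prefix-and-singleton}(a).

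\emph{Step 1 (prefixes coincide up to $L$).} By \eqref{eq:pool-contains-topL}, the $L$ globally top-ranked tokens $(1),\dots,(L)$ all lie in $C$. Every other element of $C$ is ranked below $(L)$ in the global total order. Since $C$ is sorted with the same tie-breaking, the induced order on $C$ is the restriction of the global order. Hence the first $L$ elements of $C$ are exactly $(1),\dots,(L)$, in that order. Therefore $S^C_k=S_k$ for every $k\le L$, and so $\Gamma^C_k=\Gamma_k$, $\Phi^C_k=\Phi_k$ and $J^C_k=J_k$. In particular $|C|\ge L\ge k^\star$, so the index $k^\star$ is available in the pool scan, and $J^C_{k^\star}=J_{k^\star}$.

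\emph{Step 2 (no pool prefix exceeds the global optimum).} For any $k\in[|C|]$, $J^C_k=G_f(S^C_k)$ with $G_f$ as in \eqref{eq:G-def-friendly}, and $S^C_k$ is a nonempty subset of $V$. Since $c\ge 0$, part (a) of Theorem~\ref{thm:prefix-and-singleton} gives
\[
J^C_k=G_f(S^C_k)\le \max_{S\neq\emptyset}G_f(S)=\max_{j\in[n]}J_j=J_{k^\star}.
\]
Hence $\max_k J^C_k=J_{k^\star}$, and this value is attained at $k=k^\star$.

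\emph{Step 3 (tie-breaking via the minimal index).} For $k<k^\star$ we have $k\le L$, so $J^C_k=J_k$. By the definition of $k^\star$ as the \emph{smallest} global maximizer, $J_k<J_{k^\star}$ for these $k$. So no index below $k^\star$ attains the pool maximum. Indices $k>k^\star$, including those with $k>L$ where the pool prefix may differ from the global one, are irrelevant: even if they tie, the $\min\Argmax$ rule selects the smaller index $k^\star$. Thus $k_C^\star=k^\star$, and then $S_C^\star=S^C_{k^\star}=S_{k^\star}=S^\star$ by Step 1.

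The only delicate point is Step 1: the claim that sorting $C$ with the same tie-breaking reproduces the global ranking on its top $L$ entries. This rests on the tie-breaking being a fixed total order on $V$ that is simply restricted to $C$. I would state this explicitly, together with the standing assumption $p_i>0$ so that every $\Gamma$ is positive and each $J$ is well defined. Everything else is a direct appeal to the prefix reduction \eqref{eq:prefix-reduction}.
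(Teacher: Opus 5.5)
Your proposal is correct and follows the paper's proof essentially step for step: you identify the pool prefixes with the global prefixes for $k\le L$, you bound every pool prefix by the global optimum via Theorem~\ref{thm:prefix-and-singleton}(a) so that $k^\star$ maximizes the pool scan, and you conclude with the smallest-maximizer rule. Your Step~3 (showing $J^C_k=J_k<J_{k^\star}$ for all $k<k^\star$) and your remark that the pool order is the restriction of the global total order spell out what the paper's final sentence leaves implicit.
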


\begin{proof}
Because $C$ contains the global top-$L$ items in the $\phi$-order
\eqref{eq:pool-contains-topL}, sorting $C$ by $\phi$ yields
\begin{align}
S^C_k = S_k,\qquad \Gamma^C_k=\Gamma_k,\qquad \Phi^C_k=\Phi_k,\qquad J^C_k=J_k,
\label{eq:pool-prefix-match}
\end{align}
$\forall k\in[L]$. In particular, since $k^\star\le L$, we have $J^C_{k^\star}=J_{k^\star}$.

Next, every pool-prefix $S^C_k$ is a valid subset of $[n]$, hence it is feasible
for the original maximization of $G_f(S)$.
By Theorem~\ref{thm:prefix-and-singleton}(a), the global optimum value equals the
best full prefix value:
\begin{align}
\max_{S\neq\emptyset}G_f(S)=\max_{k\in[n]}J_k=J_{k^\star}.
\end{align}
Therefore, for every $k\in[|C|]$,
\begin{align}
J^C_k
=
G_f(S^C_k)
\le
\max_{S\neq\emptyset}G_f(S)
=
J_{k^\star}
=
J^C_{k^\star}.
\end{align}
So $k^\star$ is a maximizer of the pool scan.
Finally, since we select the \emph{smallest} maximizer in both scans and
\eqref{eq:pool-prefix-match} shows $J^C_k=J_k$ on $k\in[L]$ with $L\ge k^\star$,
we get $k_C^\star=k^\star$ and hence $S_C^\star=S^\star$.
\end{proof}

\paragraph{Example (what does \texttt{top\_m=1200} mean?).}
At a single decoding step, the model produces logits $\ell\in\R^n$ and hence a
next-token distribution $p\in\Delta^{n-1}$ (after applying the selection temperature).
The hyperparameter \texttt{top\_m} restricts all Top-$W$ computations to a
\emph{candidate pool} $C\subseteq[n]$ consisting of the \texttt{top\_m} most
probable tokens under $p$:
\begin{align}
C
\;\coloneqq\;
\ArgTopM_{i\in[n]} p_i,
\qquad |C|=\texttt{top\_m}.
\label{eq:pool-example-def}
\end{align}
Thus \texttt{top\_m=1200} means that we keep only the $1200$ indices with the
largest $p_i$ values and discard the remaining $n-1200$ tokens from consideration
in the subset-update step.

Concretely, given the current set $S^{(t)}$, we compute the geometry-driven
potential (e.g., $f^{(t)}_i=-\dist(i,S^{(t)})$) only for $i\in C$, form scores
\begin{align}
\phi^{(t)}_i
\;=\;
f^{(t)}_i+\lambda\log p_i,
\qquad i\in C,
\end{align}
sort \emph{only within the pool} by decreasing $\phi^{(t)}_i$, scan prefixes over
that pool, and return a crop $S^{(t+1)}\subseteq C$. Finally, logits outside the
selected crop are masked to $-\infty$.

Theorem~\ref{thm:pool-exact} shows that this restriction can be \emph{exact} for
the fixed-$f$ subset step: if the full-vocabulary optimal prefix in $\phi$-order
has size $k^\star$ and the pool contains the top-$L$ items in that $\phi$-order
for some $L\ge k^\star$ (in particular, if it contains the top-$k^\star$ items),
then the pool-restricted prefix scan returns the same optimizer as the full scan.

\section{Evaluation Protocol}
\label{app:evaluation_protocol}
For GSM8K/GPQA, we use exact-match accuracy under standard prompting. Comparisons are paired:
each method is evaluated on the same question set and seeds.
For AlpacaEval/MT-Bench and rubric scoring, we aggregate judge scores over repeated samplings; for the rubric,
we score a fixed prompt set at each $T$ across multiple axes  Diversity/Originality/Narrative Flow/Emotional Impact/Imagery, reporting mean$\pm$std. To reduce positional bias in judge-based settings, we {randomize method order per item} and aggregate across repeats.

\section{Time Complexity Analysis}
\label{time_complexity_section}
Let $n := |\mathcal{V}|$ be the vocabulary size and $m$ be the candidate pool size (\texttt{top\_m}).
Let $d$ denote the embedding dimension, and let $I$ be the number of alternating updates
(\texttt{alt\_iters}; in our experiments we use a small constant, e.g., $I=3$).
All bounds are stated per decoding step (one call of the logits processor).

\begin{theorem}[Time complexity of Top-$W$ (geom\_mode = M)]
One Top-$W$ decoding step runs in
\[
\mathcal{O}\!\left(
n\log m \;+\; n \;+\; md \;+\; I\,(m^2 d \;+\; m\log m)
\right).
\]
With $I$ treated as a small constant (and $d$ fixed for a given model), this simplifies to
\[
\mathcal{O}\!\left(n\log m \;+\; n \;+\; m^2 d\right),
\]
and in practice the runtime is typically dominated by the $n$-scale terms (pool selection, normalization)
plus a modest $m$-scale geometric refinement.
\end{theorem}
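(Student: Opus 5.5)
We bound the cost of one call of the logits processor by walking through Algorithm~\ref{alg:topw} line by line. We restrict to the candidate pool $C$ of size $m$, as described after Theorem~\ref{thm:pool-exact}, and charge each stage separately. The total is the sum of these charges.

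\textbf{Stage 1: $n$-scale preprocessing.} Forming $p$ from the logits $\ell\in\R^n$ via $p_i\propto\exp(\ell_i/T_{\mathrm{sel}})$ costs $\mathcal{O}(n)$: one pass for the max-shift, one for the exponentials and their sum, and one for normalization. Extracting the \texttt{top\_m} pool $C=\ArgTopM_i p_i$ can be done with a size-$m$ min-heap scanned over all $n$ entries, which costs $\mathcal{O}(n\log m)$. A linear-time selection would also suffice, but the heap bound matches the stated term. The final masking step, which sets every logit outside $S^{(t_{\mathrm{final}})}$ to $-\infty$, is again $\mathcal{O}(n)$.

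\textbf{Stage 2: pool-level setup.} Gathering the $m$ embedding rows $\hat e_i$, $i\in C$, normalizing them, and applying the precomputed diagonal whitening $D(\tilde e_i-\mu)$ from Appendix~\ref{app:metric-details} costs $\mathcal{O}(md)$. We assume $\mu$ and $D$ are precomputed once per model, since they do not depend on the decoding step. The warm start $S^{(0)}$ is a nucleus-style rule on the pool: sort by $p$ and take a prefix. This costs $\mathcal{O}(m\log m)$, which is absorbed into the per-iteration terms below.

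\textbf{Stage 3: each alternation, repeated $I$ times.} In the $f$-step we must compute $f^{(t)}_i=-\dist(i,S^{(t)})$ for all $i\in C$. Since $S^{(t)}\subseteq C$ has size at most $m$, the naive computation evaluates $|C|\cdot|S^{(t)}|\le m^2$ pairwise distances at $\mathcal{O}(d)$ each, giving $\mathcal{O}(m^2d)$. Alternatively one can form the $m\times m$ distance matrix once by a Gram-matrix product; this does not improve the worst case. In the $S$-step, computing $\phi_i$ costs $\mathcal{O}(m)$. Sorting costs $\mathcal{O}(m\log m)$. The prefix sums $\Gamma_k$ and $\Phi_k$, the values $J_k$, and the argmax each take one $\mathcal{O}(m)$ pass, because $J_k$ is available in $\mathcal{O}(1)$ from the prefix sums. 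The convergence check $S^{(t+1)}=S^{(t)}$ is $\mathcal{O}(m)$. Each iteration therefore costs $\mathcal{O}(m^2d+m\log m)$.

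\textbf{Combining.} Summing the three stages gives $\mathcal{O}(n\log m+n+md+I(m^2d+m\log m))$. Treating $I$ and $d$ as constants, we use $md\le m^2d$ and $m\log m\le m^2d$ to obtain the simplified bound $\mathcal{O}(n\log m+n+m^2d)$. The step that needs the most care is the distance-to-set computation. We must commit to the worst case $|S^{(t)}|=\Theta(m)$ to justify the $m^2d$ term, and make explicit that no full-vocabulary distances are ever computed. This holds because Theorem~\ref{thm:pool-exact} justifies confining the $f$-step to $C$. The closing remark that the $n$-scale terms dominate in practice is empirical: it holds when $m^2d\lesssim n\log m$ or when the computation is GPU-parallel. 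It is not part of the asymptotic claim.
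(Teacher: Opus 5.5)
Your proposal is correct and follows essentially the same stage-by-stage accounting as the paper: $\mathcal{O}(n\log m)$ pool selection, $\mathcal{O}(n)$ normalization and masking, and $\mathcal{O}(md)$ whitening on the pool. It also matches the paper's $I$ rounds, each costing $\mathcal{O}(m^2 d)$ for distance-to-set (the paper phrases this as an $(m\times d)$ by $(d\times k)$ product with $k\le m$) plus $\mathcal{O}(m\log m)$ for the sort-and-scan. One minor quibble: confining all distance computations to the pool is simply how the implemented algorithm is defined, so the cost bound does not rely on Theorem~\ref{thm:pool-exact}, which concerns exactness of the $S$-step rather than its cost.
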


\begin{proof}
We decompose one decoding step into several standard operations.

(1) \emph{Pool formation.}
Selecting the top-$m$ logits from $n$ entries (partial sort / \texttt{topk}) costs $\mathcal{O}(n\log m)$.

(2) \emph{Probability normalization.}
Computing $\log Z=\log\sum_{i=1}^{n} e^{\ell_i}$ via \texttt{logsumexp} costs $\mathcal{O}(n)$,
and converting the $m$ pooled logits to probabilities costs $\mathcal{O}(m)$.

(3) \emph{Geometric preprocessing on the pool.}
Gathering $m$ embeddings and whitening them costs $\mathcal{O}(md)$.

(4) \emph{Alternating refinement (repeat $I$ times).}
Each iteration computes distances from all $m$ pooled items to the current selected set
using a matrix product of shape $(m\times d)$ by $(d\times k)$ with $k\le m$,
which costs $\mathcal{O}(mdk)\subseteq \mathcal{O}(m^2 d)$.
The subsequent prefix-selection step sorts $m$ scores, costing $\mathcal{O}(m\log m)$.
Thus each iteration costs $\mathcal{O}(m^2 d + m\log m)$, and $I$ iterations cost
$\mathcal{O}\!\big(I(m^2 d + m\log m)\big)$.

(5) \emph{Masking.}
Writing $-\infty$ to the logits vector touches $n$ entries, costing $\mathcal{O}(n)$.

Summing (1)--(5) yields
$\mathcal{O}\!\left(n\log m + n + md + I(m^2 d + m\log m)\right)$.
\end{proof}

\paragraph{Per-sequence cost.}
For generating $L$ tokens, the total cost scales linearly in $L$:
\[
\mathcal{O}\!\left(
L\,(n\log m + n + md + I(m^2 d + m\log m))
\right).
\]

\paragraph{Wall-clock decoding speed.}
We also report empirical wall-clock decoding speed (milliseconds per generated token, ms/tok)
for Top-$W$, Top-$H$, Top-$p$, and Min-$p$ in Table~\ref{tab:ms_per_tok}.
Across models and temperatures, Top-$W$ is consistently close to the baselines, and is on average
only slightly slower. Overall, the additional geometric refinement in Top-$W$ leads to a modest overhead,
so the practical decoding-time differences relative to Top-$H$ and top-$p$ (and similarly top-$k$) are small. Top-$W$ is on average \textbf{about 5.4\% slower} than Top-$H$, Top-$p$, and Min-$p$.

\begin{table*}[t]
\centering
\small
\setlength{\tabcolsep}{4pt}
\begin{tabular}{c cccc cccc cccc}
\toprule
& \multicolumn{4}{c}{\textbf{Phi-3 (ms/tok)}} 
& \multicolumn{4}{c}{\textbf{Qwen2.5-3B (ms/tok)}} 
& \multicolumn{4}{c}{\textbf{LLaMA-3.1-8B (ms/tok)}} \\
\cmidrule(lr){2-5}\cmidrule(lr){6-9}\cmidrule(lr){10-13}
\textbf{Temp} 
& Top-$W$ & Top-$H$ & Min-$p$ & Top-$p$
& Top-$W$ & Top-$H$ & Min-$p$ & Top-$p$
& Top-$W$ & Top-$H$ & Min-$p$ & Top-$p$ \\
\midrule
1.0 
& 24.3253 & 23.0895 & 23.0981 & 23.1436
& 25.2803 & 23.9940 & 24.0362 & 23.8861
& 26.8622 & 25.9304 & 25.9202 & 25.8799 \\
2.0 
& 24.6232 & 23.1370 & 23.0632 & 23.2854
& 25.3260 & 24.0831 & 24.0633 & 24.0998
& 27.6060 & 25.9268 & 25.9107 & 25.8495 \\
\bottomrule
\end{tabular}
\caption{Decoding speed (milliseconds per generated token, ms/tok) for different sampling methods across temperatures.}
\label{tab:ms_per_tok}
\end{table*}

\section{Hyperparameter Tuning}
\label{app:grid_search}
We held out 50 random examples from AlpacaEval as a development set and performed the  grid search only on that subset. After selecting the hyperparameters, we fixed them across all experiments. 
Specifically, we sweep $\lambda,\beta \in \{0.0,\,0.1,\,0.25,\,0.5,\,0.75,\,1.0,\,1.25,\,1.5,\,1.75,\,2.0,\,2.25,$ $\,2.5,\,2.75,\,3.0,\,3.5,\,4.0,\,4.5,\,5.0\}$. On this coarse grid, the best-performing region is centered around $(\lambda,\beta)\approx(2.25,\,2.75)$.
We then run a narrower search around this region and select $(\lambda,\beta)=(2.2,\,2.8)$ as our default setting
for all main experiments.

\section{Results of LLM as a judge} \label{app:LLM_judge}
\subsection{LLM-as-a-Judge evaluation prompts}
\label{app:judge_prompts}
Following previous research, \cite{Nguyen2024} and \cite{potraghloo2025toph} we design LLM as judge experiment as follows.

\begin{prompt}[title={\footnotesize\texttt{Judge Evaluation Prompt}}, label=prompt:judge]
You are an expert judge evaluating AI-generated creative writing. I am testing the diversity and coherent
writing capabilities of three different models. I will paste three different responses that were generated here.
Rate responses based on the following metrics:\\
(1) \textbf{Diversity}: Novelty and uniqueness of ideas;\\
(2) \textbf{Originality}: Innovative approach to the prompt;\\
(3) \textbf{Narrative Flow}: Coherence of the text;\\
(4) \textbf{Emotional Impact}: Ability to evoke feelings;\\
(5) \textbf{Imagery}: Vividness of descriptions.\\
Rate each metric from 1 to 10. Also, suggest the overall winner: the response that best maintains high
coherence while demonstrating high diversity.
\end{prompt}
\begin{prompt}[title={\footnotesize\texttt{Prompt 1}}, label=prompt:p1]
Write a story about an alien civilization’s first contact with Earth from their perspective.
\end{prompt}
\begin{prompt}[title={\footnotesize\texttt{Prompt 2}}, label=prompt:p2]
Write a story about a world where time suddenly starts moving backwards.
\end{prompt}
\begin{prompt}[title={\footnotesize\texttt{Prompt 3}}, label=prompt:p3]
Write a story about a mysterious door that appears in an unexpected place.
\end{prompt}




\subsection{Results}
Below, we go over per-model breakdown of Table \ref{tab:avg_summary_3models_4methods_1.5_1.5}. Judge-based evaluation metrics (scores from 1.0 to 10.0) across temperatures, prompts, and sampling methods
are reported for (Table~\ref{tab:judge_llama}) LLaMA3.1-8B-Instruct, (Table~\ref{tab:judge_phi}) Phi-3-Mini and (Table~\ref{tab:judge_qwen}) Qwen2.5-3B. 

\begin{table*}[!t]
\centering
\small
\setlength{\tabcolsep}{3.6pt}
\begin{tabular}{cclcccccc}
\toprule
Temp & Prompt & Sampling & M1 & M2 & M3 & M4 & M5 & Avg \\
\midrule
\multirow{12}{*}{1.0} & \multirow{4}{*}{Prompt 1} & Top-$p$ & 7.00 $\pm$ 0.89 & 6.40 $\pm$ 1.36 & 7.60 $\pm$ 0.80 & \textbf{7.20 $\pm$ 0.98} & 7.20 $\pm$ 0.75 & 7.08 $\pm$ 0.74 \\
 &  & Min-$p$ & 7.20 $\pm$ 1.17 & 6.40 $\pm$ 1.50 & \textbf{7.80 $\pm$ 0.40} & 7.20 $\pm$ 0.98 & 7.20 $\pm$ 1.17 & 7.16 $\pm$ 0.98 \\
 &  & Top-$H$ & \textbf{7.50 $\pm$ 1.12} & \textbf{7.00 $\pm$ 1.00} & 7.75 $\pm$ 0.43 & 7.00 $\pm$ 0.71 & \textbf{7.25 $\pm$ 0.83} & \textbf{7.30 $\pm$ 0.75} \\
 &  & Top-$W$ & 7.20 $\pm$ 0.40 & 6.40 $\pm$ 0.49 & 7.40 $\pm$ 0.49 & 6.60 $\pm$ 0.49 & 7.00 $\pm$ 0.00 & 6.92 $\pm$ 0.16 \\
\cmidrule(lr){2-9}
 & \multirow{4}{*}{Prompt 2} & Top-$p$ & \textbf{7.60 $\pm$ 1.36} & \textbf{7.40 $\pm$ 1.20} & \textbf{7.60 $\pm$ 0.49} & \textbf{7.40 $\pm$ 0.80} & \textbf{7.80 $\pm$ 0.75} & \textbf{7.56 $\pm$ 0.82} \\
 &  & Min-$p$ & 7.40 $\pm$ 0.49 & 6.60 $\pm$ 0.80 & 7.60 $\pm$ 0.49 & 6.60 $\pm$ 0.49 & 7.80 $\pm$ 0.40 & 7.20 $\pm$ 0.13 \\
 &  & Top-$H$ & 6.80 $\pm$ 0.40 & 6.80 $\pm$ 0.40 & 7.60 $\pm$ 0.49 & 7.00 $\pm$ 0.63 & 7.40 $\pm$ 0.49 & 7.12 $\pm$ 0.30 \\
 &  & Top-$W$ & 7.00 $\pm$ 0.89 & 7.20 $\pm$ 0.98 & 7.20 $\pm$ 0.40 & 7.00 $\pm$ 0.89 & 7.20 $\pm$ 0.98 & 7.12 $\pm$ 0.64 \\
\cmidrule(lr){2-9}
 & \multirow{4}{*}{Prompt 3} & Top-$p$ & \textbf{7.60 $\pm$ 0.49} & \textbf{7.40 $\pm$ 1.20} & \textbf{7.80 $\pm$ 0.40} & \textbf{7.80 $\pm$ 0.75} & \textbf{8.20 $\pm$ 0.40} & \textbf{7.76 $\pm$ 0.56} \\
 &  & Min-$p$ & 6.50 $\pm$ 0.87 & 6.50 $\pm$ 1.50 & 7.50 $\pm$ 0.50 & 6.75 $\pm$ 1.30 & 7.25 $\pm$ 0.43 & 6.90 $\pm$ 0.88 \\
 &  & Top-$H$ & 7.20 $\pm$ 0.40 & 6.80 $\pm$ 0.75 & 7.80 $\pm$ 0.40 & 7.40 $\pm$ 0.49 & 8.00 $\pm$ 0.00 & 7.44 $\pm$ 0.23 \\
 &  & Top-$W$ & 6.75 $\pm$ 1.30 & 6.75 $\pm$ 1.09 & 7.50 $\pm$ 0.50 & 7.50 $\pm$ 1.12 & 7.50 $\pm$ 0.87 & 7.20 $\pm$ 0.73 \\
\midrule
\multirow{12}{*}{1.5} & \multirow{4}{*}{Prompt 1} & Top-$p$ & 7.00 $\pm$ 2.53 & 6.60 $\pm$ 2.73 & 2.60 $\pm$ 0.80 & 3.60 $\pm$ 1.36 & 4.20 $\pm$ 1.17 & 4.80 $\pm$ 1.66 \\
 &  & Min-$p$ & \textbf{7.60 $\pm$ 0.49} & \textbf{7.80 $\pm$ 0.75} & \textbf{8.60 $\pm$ 0.49} & \textbf{7.60 $\pm$ 0.49} & \textbf{8.00 $\pm$ 0.63} & \textbf{7.92 $\pm$ 0.41} \\
 &  & Top-$H$ & 7.40 $\pm$ 0.49 & 7.40 $\pm$ 1.02 & 8.40 $\pm$ 0.49 & 7.20 $\pm$ 1.17 & 7.40 $\pm$ 1.02 & 7.56 $\pm$ 0.74 \\
 &  & Top-$W$ & 6.40 $\pm$ 0.49 & 6.40 $\pm$ 0.49 & 8.20 $\pm$ 0.40 & 6.60 $\pm$ 0.49 & 6.40 $\pm$ 0.49 & 6.80 $\pm$ 0.22 \\
\cmidrule(lr){2-9}
 & \multirow{4}{*}{Prompt 2} & Top-$p$ & 3.80 $\pm$ 0.98 & 2.80 $\pm$ 0.98 & 1.60 $\pm$ 0.80 & 1.80 $\pm$ 0.98 & 2.40 $\pm$ 1.02 & 2.48 $\pm$ 0.93 \\
 &  & Min-$p$ & \textbf{7.40 $\pm$ 0.80} & \textbf{7.80 $\pm$ 0.98} & 8.20 $\pm$ 0.75 & 7.00 $\pm$ 2.10 & \textbf{8.00 $\pm$ 0.89} & \textbf{7.68 $\pm$ 0.75} \\
 &  & Top-$H$ & 7.20 $\pm$ 0.40 & 7.40 $\pm$ 0.80 & \textbf{8.40 $\pm$ 0.49} & \textbf{7.60 $\pm$ 0.49} & 7.80 $\pm$ 0.40 & 7.68 $\pm$ 0.41 \\
 &  & Top-$W$ & 6.75 $\pm$ 0.83 & 6.75 $\pm$ 0.83 & 8.00 $\pm$ 0.00 & 7.25 $\pm$ 1.09 & 7.75 $\pm$ 0.83 & 7.30 $\pm$ 0.67 \\
\cmidrule(lr){2-9}
 & \multirow{4}{*}{Prompt 3} & Top-$p$ & 2.80 $\pm$ 0.40 & 2.00 $\pm$ 0.00 & 1.00 $\pm$ 0.00 & 1.20 $\pm$ 0.40 & 1.80 $\pm$ 0.40 & 1.76 $\pm$ 0.20 \\
 &  & Min-$p$ & 6.60 $\pm$ 0.80 & 7.20 $\pm$ 0.40 & \textbf{8.20 $\pm$ 0.40} & 7.60 $\pm$ 0.49 & 7.60 $\pm$ 0.80 & 7.44 $\pm$ 0.51 \\
 &  & Top-$H$ & \textbf{7.40 $\pm$ 0.49} & \textbf{7.40 $\pm$ 0.80} & 7.80 $\pm$ 0.75 & \textbf{8.00 $\pm$ 0.63} & \textbf{7.80 $\pm$ 0.75} & \textbf{7.68 $\pm$ 0.53} \\
 &  & Top-$W$ & 6.25 $\pm$ 0.43 & 6.25 $\pm$ 0.43 & 8.00 $\pm$ 0.00 & 7.00 $\pm$ 0.00 & 7.50 $\pm$ 0.50 & 7.00 $\pm$ 0.24 \\
\midrule
\multirow{12}{*}{2.0} & \multirow{4}{*}{Prompt 1} & Top-$p$ & 4.60 $\pm$ 2.42 & 3.80 $\pm$ 2.79 & 2.00 $\pm$ 0.89 & 2.20 $\pm$ 1.17 & 3.20 $\pm$ 1.60 & 3.16 $\pm$ 1.67 \\
 &  & Min-$p$ & \textbf{7.40 $\pm$ 0.80} & \textbf{7.80 $\pm$ 1.47} & 6.20 $\pm$ 1.94 & 6.40 $\pm$ 1.36 & 6.80 $\pm$ 1.17 & 6.92 $\pm$ 1.29 \\
 &  & Top-$H$ & 7.00 $\pm$ 0.63 & 7.20 $\pm$ 0.75 & 5.80 $\pm$ 1.17 & 5.20 $\pm$ 1.17 & 6.00 $\pm$ 0.63 & 6.24 $\pm$ 0.66 \\
 &  & Top-$W$ & 6.80 $\pm$ 0.75 & 7.40 $\pm$ 1.02 & \textbf{8.00 $\pm$ 0.63} & \textbf{7.00 $\pm$ 0.89} & \textbf{7.20 $\pm$ 0.40} & \textbf{7.28 $\pm$ 0.41} \\
\cmidrule(lr){2-9}
 & \multirow{4}{*}{Prompt 2} & Top-$p$ & 4.20 $\pm$ 2.71 & 3.20 $\pm$ 2.86 & 1.40 $\pm$ 0.49 & 1.80 $\pm$ 0.98 & 2.80 $\pm$ 2.40 & 2.68 $\pm$ 1.87 \\
 &  & Min-$p$ & 7.00 $\pm$ 0.63 & 6.40 $\pm$ 0.80 & 5.40 $\pm$ 1.62 & 5.60 $\pm$ 1.02 & 6.20 $\pm$ 0.98 & 6.12 $\pm$ 0.93 \\
 &  & Top-$H$ & 5.80 $\pm$ 0.75 & 5.20 $\pm$ 0.75 & 4.00 $\pm$ 0.63 & 4.80 $\pm$ 0.98 & 4.60 $\pm$ 0.80 & 4.88 $\pm$ 0.72 \\
 &  & Top-$W$ & \textbf{7.40 $\pm$ 0.80} & \textbf{8.00 $\pm$ 0.63} & \textbf{8.60 $\pm$ 0.49} & \textbf{7.80 $\pm$ 0.40} & \textbf{7.60 $\pm$ 0.80} & \textbf{7.88 $\pm$ 0.56} \\
\cmidrule(lr){2-9}
 & \multirow{4}{*}{Prompt 3} & Top-$p$ & 2.00 $\pm$ 0.00 & 1.60 $\pm$ 0.49 & 1.00 $\pm$ 0.00 & 1.00 $\pm$ 0.00 & 1.00 $\pm$ 0.00 & 1.32 $\pm$ 0.10 \\
 &  & Min-$p$ & 5.60 $\pm$ 0.49 & 5.00 $\pm$ 0.63 & 3.80 $\pm$ 1.60 & 4.00 $\pm$ 0.63 & 5.00 $\pm$ 0.63 & 4.68 $\pm$ 0.64 \\
 &  & Top-$H$ & 5.20 $\pm$ 1.17 & 5.20 $\pm$ 1.72 & 3.00 $\pm$ 0.89 & 3.80 $\pm$ 0.98 & 4.60 $\pm$ 1.20 & 4.36 $\pm$ 1.09 \\
 &  & Top-$W$ & \textbf{7.00 $\pm$ 0.63} & \textbf{7.60 $\pm$ 0.49} & \textbf{8.60 $\pm$ 0.49} & \textbf{7.60 $\pm$ 0.49} & \textbf{8.00 $\pm$ 0.00} & \textbf{7.76 $\pm$ 0.32} \\
\bottomrule
\end{tabular}
\caption{LLM-as-a-Judge creative-writing scores for Llama-3.1-8B-Instruct. Each entry is mean $\pm$ std over 5 repeats; higher is better. For each (temperature, prompt), the maximum value in each metric column is bolded (ties broken by method order: Top-$p$, Min-$p$, Top-$H$, Top-$W$).}
\label{tab:judge_llama}
\end{table*}


\begin{table*}[!t]
\centering
\small
\setlength{\tabcolsep}{3.6pt}
\begin{tabular}{cclcccccc}
\toprule
Temp & Prompt & Sampling & M1 & M2 & M3 & M4 & M5 & Avg \\
\midrule
\multirow{12}{*}{1.0} & \multirow{4}{*}{Prompt 1} & Top-$p$ & \textbf{7.40 $\pm$ 0.49} & \textbf{7.00 $\pm$ 0.89} & 7.20 $\pm$ 0.98 & \textbf{7.60 $\pm$ 0.49} & \textbf{8.40 $\pm$ 0.49} & \textbf{7.52 $\pm$ 0.35} \\
 &  & Min-$p$ & 6.60 $\pm$ 0.80 & 6.00 $\pm$ 0.89 & \textbf{8.00 $\pm$ 0.00} & 6.80 $\pm$ 0.75 & 7.60 $\pm$ 0.80 & 7.00 $\pm$ 0.61 \\
 &  & Top-$H$ & 6.00 $\pm$ 0.00 & 5.80 $\pm$ 0.75 & 7.80 $\pm$ 0.40 & 6.60 $\pm$ 0.80 & 7.00 $\pm$ 0.00 & 6.64 $\pm$ 0.23 \\
 &  & Top-$W$ & 7.00 $\pm$ 0.63 & 7.00 $\pm$ 1.10 & 7.80 $\pm$ 0.40 & 7.20 $\pm$ 0.98 & 7.60 $\pm$ 0.80 & 7.32 $\pm$ 0.59 \\
\cmidrule(lr){2-9}
 & \multirow{4}{*}{Prompt 2} & Top-$p$ & \textbf{8.00 $\pm$ 1.26} & \textbf{7.60 $\pm$ 1.50} & 7.80 $\pm$ 0.75 & \textbf{8.00 $\pm$ 1.10} & \textbf{8.00 $\pm$ 1.26} & \textbf{7.88 $\pm$ 1.06} \\
 &  & Min-$p$ & 7.00 $\pm$ 1.10 & 6.80 $\pm$ 1.17 & 7.60 $\pm$ 1.02 & 7.00 $\pm$ 1.10 & 7.00 $\pm$ 0.89 & 7.08 $\pm$ 0.78 \\
 &  & Top-$H$ & 7.40 $\pm$ 0.80 & 6.80 $\pm$ 0.75 & \textbf{8.20 $\pm$ 0.75} & 7.60 $\pm$ 0.49 & 7.40 $\pm$ 0.49 & 7.48 $\pm$ 0.59 \\
 &  & Top-$W$ & 7.00 $\pm$ 1.41 & 6.40 $\pm$ 1.02 & 7.60 $\pm$ 0.49 & 6.60 $\pm$ 0.49 & 6.60 $\pm$ 1.02 & 6.84 $\pm$ 0.79 \\
\cmidrule(lr){2-9}
 & \multirow{4}{*}{Prompt 3} & Top-$p$ & 7.00 $\pm$ 0.63 & \textbf{6.60 $\pm$ 1.02} & 7.60 $\pm$ 0.49 & 6.80 $\pm$ 0.40 & 7.80 $\pm$ 0.75 & 7.16 $\pm$ 0.57 \\
 &  & Min-$p$ & 7.00 $\pm$ 0.63 & 6.20 $\pm$ 0.98 & 7.20 $\pm$ 0.40 & 7.00 $\pm$ 0.63 & 7.80 $\pm$ 0.75 & 7.04 $\pm$ 0.65 \\
 &  & Top-$H$ & 7.00 $\pm$ 0.63 & 6.20 $\pm$ 0.98 & \textbf{7.80 $\pm$ 0.40} & 7.00 $\pm$ 0.63 & 7.60 $\pm$ 0.49 & 7.12 $\pm$ 0.45 \\
 &  & Top-$W$ & \textbf{7.25 $\pm$ 0.83} & 6.25 $\pm$ 0.83 & 7.75 $\pm$ 0.43 & \textbf{7.25 $\pm$ 0.83} & \textbf{8.00 $\pm$ 0.71} & \textbf{7.30 $\pm$ 0.70} \\
\midrule
\multirow{12}{*}{1.5} & \multirow{4}{*}{Prompt 1} & Top-$p$ & 4.40 $\pm$ 1.02 & 3.40 $\pm$ 1.02 & 2.00 $\pm$ 0.63 & 2.40 $\pm$ 1.02 & 3.00 $\pm$ 0.63 & 3.04 $\pm$ 0.85 \\
 &  & Min-$p$ & \textbf{8.00 $\pm$ 1.10} & \textbf{7.80 $\pm$ 1.47} & \textbf{8.40 $\pm$ 0.49} & \textbf{8.60 $\pm$ 0.80} & \textbf{8.20 $\pm$ 0.75} & \textbf{8.20 $\pm$ 0.82} \\
 &  & Top-$H$ & 7.40 $\pm$ 0.49 & 7.00 $\pm$ 0.63 & 8.20 $\pm$ 0.40 & 8.20 $\pm$ 0.75 & 7.60 $\pm$ 0.80 & 7.68 $\pm$ 0.53 \\
 &  & Top-$W$ & 6.60 $\pm$ 0.80 & 6.00 $\pm$ 0.63 & 8.00 $\pm$ 0.63 & 7.40 $\pm$ 0.49 & 7.20 $\pm$ 0.75 & 7.04 $\pm$ 0.56 \\
\cmidrule(lr){2-9}
 & \multirow{4}{*}{Prompt 2} & Top-$p$ & 5.80 $\pm$ 1.47 & 5.20 $\pm$ 1.94 & 2.40 $\pm$ 0.49 & 3.20 $\pm$ 0.75 & 4.40 $\pm$ 1.36 & 4.20 $\pm$ 1.17 \\
 &  & Min-$p$ & \textbf{7.00 $\pm$ 0.63} & \textbf{7.40 $\pm$ 1.02} & \textbf{8.40 $\pm$ 0.49} & \textbf{8.00 $\pm$ 0.63} & \textbf{7.80 $\pm$ 0.75} & \textbf{7.72 $\pm$ 0.55} \\
 &  & Top-$H$ & 7.00 $\pm$ 0.89 & 7.20 $\pm$ 0.75 & 7.80 $\pm$ 0.40 & 7.60 $\pm$ 0.80 & 7.40 $\pm$ 0.80 & 7.40 $\pm$ 0.55 \\
 &  & Top-$W$ & 6.80 $\pm$ 0.40 & 7.20 $\pm$ 0.75 & 8.40 $\pm$ 0.49 & 7.60 $\pm$ 0.49 & 7.20 $\pm$ 0.40 & 7.44 $\pm$ 0.32 \\
\cmidrule(lr){2-9}
 & \multirow{4}{*}{Prompt 3} & Top-$p$ & 5.40 $\pm$ 2.58 & 5.00 $\pm$ 3.29 & 2.20 $\pm$ 1.17 & 2.80 $\pm$ 1.47 & 3.60 $\pm$ 1.62 & 3.80 $\pm$ 1.99 \\
 &  & Min-$p$ & 6.80 $\pm$ 0.40 & 6.20 $\pm$ 0.98 & 8.00 $\pm$ 0.89 & 7.20 $\pm$ 1.17 & 7.40 $\pm$ 0.80 & 7.12 $\pm$ 0.74 \\
 &  & Top-$H$ & \textbf{7.20 $\pm$ 0.75} & 6.60 $\pm$ 1.02 & 7.80 $\pm$ 0.98 & \textbf{7.40 $\pm$ 0.49} & 7.80 $\pm$ 0.75 & 7.36 $\pm$ 0.61 \\
 &  & Top-$W$ & 7.00 $\pm$ 0.89 & \textbf{6.60 $\pm$ 1.36} & \textbf{8.20 $\pm$ 0.75} & 7.20 $\pm$ 0.75 & \textbf{8.00 $\pm$ 0.89} & \textbf{7.40 $\pm$ 0.90} \\
\midrule
\multirow{12}{*}{2.0} & \multirow{4}{*}{Prompt 1} & Top-$p$ & 3.20 $\pm$ 1.47 & 2.40 $\pm$ 1.36 & 1.40 $\pm$ 0.80 & 1.60 $\pm$ 1.20 & 2.00 $\pm$ 1.55 & 2.12 $\pm$ 1.26 \\
 &  & Min-$p$ & 7.20 $\pm$ 0.40 & 7.00 $\pm$ 0.63 & 7.40 $\pm$ 0.80 & 6.20 $\pm$ 0.75 & 6.80 $\pm$ 0.40 & 6.92 $\pm$ 0.37 \\
 &  & Top-$H$ & \textbf{8.20 $\pm$ 0.75} & \textbf{8.20 $\pm$ 1.17} & 6.00 $\pm$ 0.63 & 7.20 $\pm$ 0.75 & 8.00 $\pm$ 1.10 & 7.52 $\pm$ 0.77 \\
 &  & Top-$W$ & 7.75 $\pm$ 0.43 & 7.75 $\pm$ 1.09 & \textbf{8.75 $\pm$ 0.43} & \textbf{7.75 $\pm$ 0.43} & \textbf{8.50 $\pm$ 0.50} & \textbf{8.10 $\pm$ 0.44} \\
\cmidrule(lr){2-9}
 & \multirow{4}{*}{Prompt 2} & Top-$p$ & 2.40 $\pm$ 0.49 & 1.60 $\pm$ 0.49 & 1.00 $\pm$ 0.00 & 1.00 $\pm$ 0.00 & 1.20 $\pm$ 0.40 & 1.44 $\pm$ 0.23 \\
 &  & Min-$p$ & \textbf{8.25 $\pm$ 0.43} & \textbf{8.50 $\pm$ 0.50} & 8.25 $\pm$ 0.43 & \textbf{8.25 $\pm$ 0.83} & \textbf{8.00 $\pm$ 0.71} & \textbf{8.25 $\pm$ 0.30} \\
 &  & Top-$H$ & 7.00 $\pm$ 0.89 & 7.20 $\pm$ 1.33 & 5.00 $\pm$ 1.10 & 6.20 $\pm$ 1.17 & 6.60 $\pm$ 1.20 & 6.40 $\pm$ 1.02 \\
 &  & Top-$W$ & 7.60 $\pm$ 0.80 & 8.20 $\pm$ 0.75 & \textbf{8.60 $\pm$ 0.49} & 8.00 $\pm$ 0.63 & 7.60 $\pm$ 0.80 & 8.00 $\pm$ 0.49 \\
\cmidrule(lr){2-9}
 & \multirow{4}{*}{Prompt 3} & Top-$p$ & 2.20 $\pm$ 0.40 & 1.40 $\pm$ 0.49 & 1.00 $\pm$ 0.00 & 1.00 $\pm$ 0.00 & 1.20 $\pm$ 0.40 & 1.36 $\pm$ 0.23 \\
 &  & Min-$p$ & \textbf{7.60 $\pm$ 0.49} & \textbf{7.60 $\pm$ 0.49} & 7.60 $\pm$ 1.50 & 7.20 $\pm$ 0.75 & 7.80 $\pm$ 0.75 & 7.56 $\pm$ 0.56 \\
 &  & Top-$H$ & 6.20 $\pm$ 0.75 & 5.80 $\pm$ 1.33 & 5.00 $\pm$ 1.79 & 5.60 $\pm$ 1.62 & 6.20 $\pm$ 1.72 & 5.76 $\pm$ 1.41 \\
 &  & Top-$W$ & 7.20 $\pm$ 0.75 & 7.40 $\pm$ 1.36 & \textbf{8.40 $\pm$ 0.49} & \textbf{7.40 $\pm$ 0.49} & \textbf{8.20 $\pm$ 0.75} & \textbf{7.72 $\pm$ 0.69} \\
\bottomrule
\end{tabular}
\caption{LLM-as-a-Judge creative-writing scores for Phi-3-mini-4k-instruct. Each entry is mean $\pm$ std over 5 repeats; higher is better. For each (temperature, prompt), the maximum value in each metric column is bolded (ties broken by method order: Top-$p$, Min-$p$, Top-$H$, Top-$W$).}
\label{tab:judge_phi}
\end{table*}


\begin{table*}[!t]
\centering
\small
\setlength{\tabcolsep}{3.6pt}
\begin{tabular}{cclcccccc}
\toprule
Temp & Prompt & Sampling & M1 & M2 & M3 & M4 & M5 & Avg \\
\midrule
\multirow{12}{*}{1.0} & \multirow{4}{*}{Prompt 1} & Top-$p$ & 5.80 $\pm$ 1.94 & 5.40 $\pm$ 1.85 & 6.20 $\pm$ 2.14 & 5.80 $\pm$ 2.04 & 6.40 $\pm$ 2.58 & 5.92 $\pm$ 2.02 \\
 &  & Min-$p$ & 6.40 $\pm$ 0.80 & 5.60 $\pm$ 1.20 & 7.00 $\pm$ 1.10 & 6.00 $\pm$ 1.10 & 6.60 $\pm$ 0.80 & 6.32 $\pm$ 0.78 \\
 &  & Top-$H$ & 5.80 $\pm$ 1.17 & 4.80 $\pm$ 1.17 & 6.40 $\pm$ 2.24 & 5.40 $\pm$ 1.36 & 6.00 $\pm$ 1.79 & 5.68 $\pm$ 1.49 \\
 &  & Top-$W$ & \textbf{7.00 $\pm$ 0.63} & \textbf{6.40 $\pm$ 1.02} & \textbf{8.00 $\pm$ 0.63} & \textbf{7.20 $\pm$ 0.75} & \textbf{7.60 $\pm$ 0.49} & \textbf{7.24 $\pm$ 0.59} \\
\cmidrule(lr){2-9}
 & \multirow{4}{*}{Prompt 2} & Top-$p$ & \textbf{7.00 $\pm$ 1.10} & \textbf{7.40 $\pm$ 0.80} & 7.20 $\pm$ 0.75 & \textbf{7.20 $\pm$ 0.98} & \textbf{7.00 $\pm$ 0.63} & \textbf{7.16 $\pm$ 0.71} \\
 &  & Min-$p$ & 6.40 $\pm$ 1.20 & 6.40 $\pm$ 1.20 & 7.00 $\pm$ 0.89 & 6.80 $\pm$ 1.17 & 6.60 $\pm$ 1.36 & 6.64 $\pm$ 1.11 \\
 &  & Top-$H$ & 6.20 $\pm$ 0.40 & 5.80 $\pm$ 0.75 & \textbf{7.40 $\pm$ 0.49} & 6.80 $\pm$ 0.75 & 6.40 $\pm$ 0.49 & 6.52 $\pm$ 0.45 \\
 &  & Top-$W$ & 6.40 $\pm$ 1.02 & 6.20 $\pm$ 0.98 & 7.00 $\pm$ 0.63 & 6.60 $\pm$ 0.80 & 6.20 $\pm$ 0.75 & 6.48 $\pm$ 0.75 \\
\cmidrule(lr){2-9}
 & \multirow{4}{*}{Prompt 3} & Top-$p$ & \textbf{7.00 $\pm$ 0.89} & \textbf{7.20 $\pm$ 1.17} & 7.00 $\pm$ 0.89 & 6.60 $\pm$ 1.02 & \textbf{7.80 $\pm$ 0.40} & 7.12 $\pm$ 0.68 \\
 &  & Min-$p$ & 6.60 $\pm$ 1.02 & 6.40 $\pm$ 1.36 & 7.40 $\pm$ 0.49 & \textbf{7.00 $\pm$ 1.10} & 7.60 $\pm$ 1.02 & 7.00 $\pm$ 0.87 \\
 &  & Top-$H$ & 7.00 $\pm$ 0.71 & 6.75 $\pm$ 1.30 & \textbf{7.75 $\pm$ 0.83} & 6.75 $\pm$ 0.83 & 7.50 $\pm$ 1.12 & \textbf{7.15 $\pm$ 0.85} \\
 &  & Top-$W$ & 6.00 $\pm$ 0.89 & 6.20 $\pm$ 0.98 & 7.60 $\pm$ 0.49 & 6.40 $\pm$ 0.80 & 7.20 $\pm$ 0.75 & 6.68 $\pm$ 0.68 \\
\midrule
\multirow{12}{*}{1.5} & \multirow{4}{*}{Prompt 1} & Top-$p$ & 1.75 $\pm$ 0.43 & 1.50 $\pm$ 0.50 & 1.00 $\pm$ 0.00 & 1.00 $\pm$ 0.00 & 1.00 $\pm$ 0.00 & 1.25 $\pm$ 0.17 \\
 &  & Min-$p$ & 5.75 $\pm$ 0.83 & 5.75 $\pm$ 0.83 & 6.50 $\pm$ 1.50 & 5.50 $\pm$ 1.50 & 5.75 $\pm$ 1.64 & 5.85 $\pm$ 1.18 \\
 &  & Top-$H$ & 5.50 $\pm$ 1.50 & 5.50 $\pm$ 1.12 & 5.25 $\pm$ 2.17 & 4.50 $\pm$ 1.80 & 5.50 $\pm$ 1.80 & 5.25 $\pm$ 1.64 \\
 &  & Top-$W$ & \textbf{7.00 $\pm$ 1.22} & \textbf{7.25 $\pm$ 1.30} & \textbf{8.25 $\pm$ 0.83} & \textbf{7.50 $\pm$ 0.87} & \textbf{8.25 $\pm$ 1.30} & \textbf{7.65 $\pm$ 1.07} \\
\cmidrule(lr){2-9}
 & \multirow{4}{*}{Prompt 2} & Top-$p$ & 1.60 $\pm$ 0.49 & 1.20 $\pm$ 0.40 & 1.00 $\pm$ 0.00 & 1.00 $\pm$ 0.00 & 1.00 $\pm$ 0.00 & 1.16 $\pm$ 0.15 \\
 &  & Min-$p$ & 6.60 $\pm$ 0.80 & 6.00 $\pm$ 0.89 & 7.40 $\pm$ 1.02 & 7.00 $\pm$ 0.89 & 6.60 $\pm$ 1.02 & 6.72 $\pm$ 0.84 \\
 &  & Top-$H$ & \textbf{7.20 $\pm$ 0.98} & \textbf{6.80 $\pm$ 0.75} & 6.80 $\pm$ 1.17 & \textbf{7.20 $\pm$ 0.98} & \textbf{7.20 $\pm$ 0.98} & \textbf{7.04 $\pm$ 0.90} \\
 &  & Top-$W$ & 6.40 $\pm$ 1.85 & 6.20 $\pm$ 1.60 & \textbf{7.60 $\pm$ 1.36} & 6.40 $\pm$ 1.36 & 6.40 $\pm$ 1.85 & 6.60 $\pm$ 1.56 \\
\cmidrule(lr){2-9}
 & \multirow{4}{*}{Prompt 3} & Top-$p$ & 3.00 $\pm$ 2.53 & 2.20 $\pm$ 1.47 & 1.00 $\pm$ 0.00 & 1.20 $\pm$ 0.40 & 1.40 $\pm$ 0.80 & 1.76 $\pm$ 1.03 \\
 &  & Min-$p$ & \textbf{7.00 $\pm$ 0.63} & \textbf{7.40 $\pm$ 0.80} & \textbf{8.00 $\pm$ 0.63} & 7.20 $\pm$ 0.75 & \textbf{7.60 $\pm$ 0.49} & \textbf{7.44 $\pm$ 0.54} \\
 &  & Top-$H$ & 7.00 $\pm$ 0.89 & 6.80 $\pm$ 1.47 & 7.60 $\pm$ 0.80 & \textbf{7.40 $\pm$ 0.80} & 7.60 $\pm$ 0.80 & 7.28 $\pm$ 0.84 \\
 &  & Top-$W$ & 6.80 $\pm$ 0.75 & 6.20 $\pm$ 0.98 & 7.80 $\pm$ 0.40 & 6.60 $\pm$ 0.49 & 7.40 $\pm$ 0.80 & 6.96 $\pm$ 0.62 \\
\midrule
\multirow{12}{*}{2.0} & \multirow{4}{*}{Prompt 1} & Top-$p$ & 2.20 $\pm$ 1.94 & 2.00 $\pm$ 1.55 & 1.20 $\pm$ 0.40 & 1.20 $\pm$ 0.40 & 1.40 $\pm$ 0.80 & 1.60 $\pm$ 1.01 \\
 &  & Min-$p$ & 5.80 $\pm$ 2.14 & 5.60 $\pm$ 2.42 & 5.40 $\pm$ 1.85 & 4.80 $\pm$ 2.32 & 5.60 $\pm$ 2.65 & 5.44 $\pm$ 2.21 \\
 &  & Top-$H$ & 5.60 $\pm$ 1.62 & 5.20 $\pm$ 1.33 & 3.40 $\pm$ 0.80 & 3.20 $\pm$ 0.75 & 4.40 $\pm$ 0.80 & 4.36 $\pm$ 0.96 \\
 &  & Top-$W$ & \textbf{7.80 $\pm$ 0.75} & \textbf{7.80 $\pm$ 0.98} & \textbf{8.20 $\pm$ 0.75} & \textbf{7.60 $\pm$ 0.49} & \textbf{7.60 $\pm$ 0.49} & \textbf{7.80 $\pm$ 0.40} \\
\cmidrule(lr){2-9}
 & \multirow{4}{*}{Prompt 2} & Top-$p$ & 2.00 $\pm$ 1.55 & 2.00 $\pm$ 2.00 & 1.20 $\pm$ 0.40 & 1.40 $\pm$ 0.80 & 1.60 $\pm$ 1.20 & 1.64 $\pm$ 1.18 \\
 &  & Min-$p$ & \textbf{6.80 $\pm$ 0.98} & \textbf{7.00 $\pm$ 1.26} & 6.20 $\pm$ 1.33 & 6.00 $\pm$ 1.26 & 6.20 $\pm$ 0.98 & 6.44 $\pm$ 1.00 \\
 &  & Top-$H$ & 5.00 $\pm$ 2.61 & 5.00 $\pm$ 3.16 & 2.60 $\pm$ 1.36 & 3.60 $\pm$ 1.85 & 4.80 $\pm$ 2.99 & 4.20 $\pm$ 2.36 \\
 &  & Top-$W$ & 6.40 $\pm$ 1.62 & 6.40 $\pm$ 2.15 & \textbf{7.40 $\pm$ 1.74} & \textbf{6.40 $\pm$ 1.85} & \textbf{7.40 $\pm$ 1.02} & \textbf{6.80 $\pm$ 1.56} \\
\cmidrule(lr){2-9}
 & \multirow{4}{*}{Prompt 3} & Top-$p$ & 1.60 $\pm$ 0.49 & 1.40 $\pm$ 0.49 & 1.00 $\pm$ 0.00 & 1.00 $\pm$ 0.00 & 1.00 $\pm$ 0.00 & 1.20 $\pm$ 0.18 \\
 &  & Min-$p$ & \textbf{6.40 $\pm$ 0.80} & 6.00 $\pm$ 1.26 & 6.00 $\pm$ 0.63 & 5.80 $\pm$ 0.75 & 6.60 $\pm$ 0.80 & 6.16 $\pm$ 0.65 \\
 &  & Top-$H$ & 5.20 $\pm$ 0.75 & 4.60 $\pm$ 0.49 & 2.80 $\pm$ 0.40 & 3.40 $\pm$ 0.49 & 4.20 $\pm$ 0.75 & 4.04 $\pm$ 0.48 \\
 &  & Top-$W$ & 6.40 $\pm$ 0.49 & \textbf{6.80 $\pm$ 0.98} & \textbf{8.00 $\pm$ 0.00} & \textbf{6.80 $\pm$ 0.40} & \textbf{7.60 $\pm$ 0.80} & \textbf{7.12 $\pm$ 0.48} \\
\bottomrule
\end{tabular}
\caption{LLM-as-a-Judge creative-writing scores for Qwen2.5-3B. Each entry is mean $\pm$ std over 5 repeats; higher is better. For each (temperature, prompt), the maximum value in each metric column is bolded (ties broken by method order: Top-$p$, Min-$p$, Top-$H$, Top-$W$).}
\label{tab:judge_qwen}
\end{table*}

\begin{table*}[!t]
\centering
\scriptsize
\setlength{\tabcolsep}{2.5pt}
\renewcommand{\arraystretch}{0.92}
\resizebox{\textwidth}{!}{%
\begin{tabular}{llcccccccccccc}
\toprule
& & \multicolumn{4}{c}{\textbf{Qwen}} & \multicolumn{4}{c}{\textbf{Llama}} & \multicolumn{4}{c}{\textbf{Phi}} \\
\cmidrule(lr){3-6}\cmidrule(lr){7-10}\cmidrule(lr){11-14}
\textbf{$T$} & \textbf{Prompt} & \textbf{Min-$p$} & \textbf{Top-$p$} & \textbf{Top-$H$} & \textbf{Top-$W$} & \textbf{Min-$p$} & \textbf{Top-$p$} & \textbf{Top-$H$} & \textbf{Top-$W$} & \textbf{Min-$p$} & \textbf{Top-$p$} & \textbf{Top-$H$} & \textbf{Top-$W$} \\
\midrule
1.0 & Prompt 1 & 6.32 $\pm$ 0.78 & 5.92 $\pm$ 2.02 & 5.68 $\pm$ 1.49 & \textbf{7.24 $\pm$ 0.59} & 7.16 $\pm$ 0.98 & 7.08 $\pm$ 0.74 & \textbf{7.30 $\pm$ 0.75} & 6.92 $\pm$ 0.16 & 7.00 $\pm$ 0.61 & \textbf{7.52 $\pm$ 0.35} & 6.64 $\pm$ 0.23 & 7.32 $\pm$ 0.59 \\
1.0 & Prompt 2 & 6.64 $\pm$ 1.11 & \textbf{7.16 $\pm$ 0.71} & 6.52 $\pm$ 0.45 & 6.48 $\pm$ 0.75 & 7.20 $\pm$ 0.13 & \textbf{7.56 $\pm$ 0.82} & 7.12 $\pm$ 0.30 & 7.12 $\pm$ 0.64 & 7.08 $\pm$ 0.78 & \textbf{7.88 $\pm$ 1.06} & 7.48 $\pm$ 0.59 & 6.84 $\pm$ 0.79 \\
1.0 & Prompt 3 & 7.00 $\pm$ 0.87 & 7.12 $\pm$ 0.68 & \textbf{7.15 $\pm$ 0.85} & 6.68 $\pm$ 0.68 & 6.90 $\pm$ 0.88 & \textbf{7.76 $\pm$ 0.56} & 7.44 $\pm$ 0.23 & 7.20 $\pm$ 0.73 & 7.04 $\pm$ 0.65 & 7.16 $\pm$ 0.57 & 7.12 $\pm$ 0.45 & \textbf{7.30 $\pm$ 0.70} \\
1.5 & Prompt 1 & 5.85 $\pm$ 1.18 & 1.25 $\pm$ 0.17 & 5.25 $\pm$ 1.64 & \textbf{7.65 $\pm$ 1.07} & \textbf{7.92 $\pm$ 0.41} & 4.80 $\pm$ 1.66 & 7.56 $\pm$ 0.74 & 6.80 $\pm$ 0.22 & \textbf{8.20 $\pm$ 0.82} & 3.04 $\pm$ 0.85 & 7.68 $\pm$ 0.53 & 7.04 $\pm$ 0.56 \\
1.5 & Prompt 2 & 6.72 $\pm$ 0.84 & 1.16 $\pm$ 0.15 & \textbf{7.04 $\pm$ 0.90} & 6.60 $\pm$ 1.56 & \textbf{7.68 $\pm$ 0.75} & 2.48 $\pm$ 0.93 & \textbf{7.68 $\pm$ 0.41} & 7.30 $\pm$ 0.67 & \textbf{7.72 $\pm$ 0.55} & 4.20 $\pm$ 1.17 & 7.40 $\pm$ 0.55 & 7.44 $\pm$ 0.32 \\
1.5 & Prompt 3 & \textbf{7.44 $\pm$ 0.54} & 1.76 $\pm$ 1.03 & 7.28 $\pm$ 0.84 & 6.96 $\pm$ 0.62 & 7.44 $\pm$ 0.51 & 1.76 $\pm$ 0.20 & \textbf{7.68 $\pm$ 0.53} & 7.00 $\pm$ 0.24 & 7.12 $\pm$ 0.74 & 3.80 $\pm$ 1.99 & 7.36 $\pm$ 0.61 & \textbf{7.40 $\pm$ 0.90} \\
2.0 & Prompt 1 & 5.44 $\pm$ 2.21 & 1.60 $\pm$ 1.01 & 4.36 $\pm$ 0.96 & \textbf{7.80 $\pm$ 0.40} & 6.92 $\pm$ 1.29 & 3.16 $\pm$ 1.67 & 6.24 $\pm$ 0.66 & \textbf{7.28 $\pm$ 0.41} & 6.92 $\pm$ 0.37 & 2.12 $\pm$ 1.26 & 7.52 $\pm$ 0.77 & \textbf{8.10 $\pm$ 0.44} \\
2.0 & Prompt 2 & 6.44 $\pm$ 1.00 & 1.64 $\pm$ 1.18 & 4.20 $\pm$ 2.36 & \textbf{6.80 $\pm$ 1.56} & 6.12 $\pm$ 0.93 & 2.68 $\pm$ 1.87 & 4.88 $\pm$ 0.72 & \textbf{7.88 $\pm$ 0.56} & \textbf{8.25 $\pm$ 0.30} & 1.44 $\pm$ 0.23 & 6.40 $\pm$ 1.02 & 8.00 $\pm$ 0.49 \\
2.0 & Prompt 3 & 6.16 $\pm$ 0.65 & 1.20 $\pm$ 0.18 & 4.04 $\pm$ 0.48 & \textbf{7.12 $\pm$ 0.48} & 4.68 $\pm$ 0.64 & 1.32 $\pm$ 0.10 & 4.36 $\pm$ 1.09 & \textbf{7.76 $\pm$ 0.32} & 7.56 $\pm$ 0.56 & 1.36 $\pm$ 0.23 & 5.76 $\pm$ 1.41 & \textbf{7.72 $\pm$ 0.69} \\
\bottomrule
\end{tabular}%
}
\caption{Average judge score (mean $\pm$ std over repeats) for each temperature $T$ and prompt, comparing four sampling methods on three base models. For each $(T,\mathrm{prompt})$ and model, the best (highest mean) method is bolded; ties are bolded for all maxima. For details on each of the, Diversity, Originality, Narrative Flow, Emotional Impact check the Appendix. Across all 27 triplets, Min-$p$, top-$p$, and Top-$H$ win 6, 5, and 5 cases respectively, 
while Top-$W$ wins 12. {(This experiment is done under $\beta(T)=1.5+1.5T$ while keeping the rest of parameters the same. In this experiment, $\beta$ is higher than the default of this paper which is $\beta=2.8$ and we can see that higher $\beta$ can actually help depending on dataset and type of experiment such as when we value Diversity, Originality, Narrative Flow, Emotional Impact check, all together.)}}
\label{tab:avg_summary_3models_4methods_1.5_1.5}
\end{table*}

\end{document}